\newif\ifJOURNAL
\newif\ifCONF    
\newif\ifarXiv
\newif\ifWP
\newif\ifFULL
\newif\ifLandscape   
\newif\ifPortrait    
\newif\ifnotCONF  
\newif\ifnotarXiv  
\newif\ifTR  
\newif\ifnotTR
  \newcommand{\OCMVII}{Vovk/Petej:2014UAI}
  \newcommand{\OCMXII}{OCM12}
  \newcommand{\OCMVII}{Vovk/Petej:2014UAI}
  \newcommand{\OCMXII}{OCM12}
  \newcommand{\OCMVII}{Vovk/Petej:arXiv1211}
  \newcommand{\OCMXII}{OCM12}
  \newcommand{\OCMVII}{OCM7}
  \newcommand{\OCMXII}{OCM12}
  \newcommand{\Extra}[1]{}
  \newcommand{\Extra}[1]{}
\newif\iftwodates
\renewcommand\maketitle{\begin{titlepage}%
  \let\footnotesize\small
  \let\footnoterule\relax
  \let \footnote \thanks
  \null\vfil
  \vskip 40\p@
  \begin{center}%
    {\LARGE \bf \@title \par}%
    \vskip 3em%
    {\large
     \lineskip .75em%
      \begin{tabular}[t]{c}%
        \@author
      \end{tabular}\par}%
      \vskip 1.5em%
  \end{center}\par
  \vfill
  \begin{center}
    \includegraphics[width=0.2\textwidth]{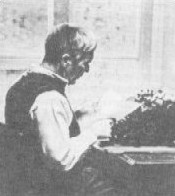}
    \hskip 3em%
    \raisebox{-0.2em}{\parbox[b]{0.6\textwidth}
    {практические выводы\\
      \hspace*{1em}
      теории вероятностей\\
      \hspace*{2em}
      могут быть обоснованы\\
      \hspace*{3em}
      в качестве следствий\\
      \hspace*{4em}
      гипотез о \emph{предельной}\\
      \hspace*{4em}
      при данных ограничениях\\
      \hspace*{3em}
      сложности изучаемых явлений}}
  \end{center}
  \@thanks
  \vfill
  \begin{center}
    {\large \bf On-line Compression Modelling Project (New Series)}
  \end{center}
  \begin{center}
    {\large Working Paper \#\No}
  \end{center}
  \begin{center}
    {\iftwodates\large First posted \firstposted.
    Last revised \@date.\else\large\@date\fi}
  \end{center}
  \begin{center}
    Project web site:\\
    http://alrw.net
  \end{center}
  \end{titlepage}%
  \setcounter{footnote}{0}%
  \global\let\thanks\relax
  \global\let\maketitle\relax
  \global\let\@thanks\@empty
  \global\let\@author\@empty
  \global\let\@date\@empty
  \global\let\@title\@empty
  \global\let\title\relax
  \global\let\author\relax
  \global\let\date\relax
  \global\let\and\relax
}
\renewenvironment{abstract}{%
  \titlepage
  \null\vfil
  \@beginparpenalty\@lowpenalty
  \begin{center}%
    \Large \bfseries \abstractname
    \@endparpenalty\@M
  \end{center}}%
  {\par\vfill\tableofcontents\endtitlepage}
\renewenvironment{thebibliography}[1]
  {\section*{\refname}%
  \addcontentsline{toc}{section}{\refname}
  \@mkboth{\MakeUppercase\refname}{\MakeUppercase\refname}%
  \list{\@biblabel{\@arabic\c@enumiv}}%
    {\settowidth\labelwidth{\@biblabel{#1}}%
    \leftmargin\labelwidth
    \advance\leftmargin\labelsep
    \@openbib@code
    \usecounter{enumiv}%
    \let\p@enumiv\@empty
    \renewcommand\theenumiv{\@arabic\c@enumiv}}%
    \sloppy
    \clubpenalty4000
    \@clubpenalty \clubpenalty
    \widowpenalty4000%
    \sfcode`\.\@m}
    {\def\@noitemerr
    {\@latex@warning{Empty `thebibliography' environment}}%
  \endlist}
  \newcommand{\Extra}[1]{}
  \newcommand{\zzrelax}[1]{\relax}
  \renewcommand{\Extra}[1]{\blue{#1}}
  \newcommand{\blue}[1]{\textcolor{blue}{#1}}
  \newcommand{\bluebegin}{\begingroup\color{blue}}
  \newcommand{\blueend}{\endgroup}
\DeclareMathOperator{\Prob}{\mathbb{P}}   
\DeclareMathOperator{\Expect}{\mathbb{E}} 
\DeclareMathOperator{\IVAP}{IVAP}        
\DeclareMathOperator{\CVAP}{CVAP}        
\DeclareMathOperator{\GM}{GM}            
\DeclareMathOperator{\key}{key}          
\DeclareMathOperator{\payload}{payload}  
\DeclareMathOperator{\BST}{BST}          
\DeclareMathOperator{\III}{\boldsymbol{1}}      
\newcommand{\KL}{\textrm{KL}}       
\newcommand{\Br}{{\rm Br}}
\newcommand{\MLL}{\textrm{MLL}}
\newcommand{\MBL}{\textrm{MBL}}
\newcommand{\dd}{\mathrm{d}}
\newcommand{\Push}{\textsc{Push}}
\newcommand{\Pop}{\textsc{Pop}}
\newcommand{\StackEmpty}{\textsc{Stack-Empty}}
\newcommand{\Top}{\textsc{Top}}
\newcommand{\NextToTop}{\textsc{Next-To-Top}}
  \newtheorem{lemma}{Lemma}
  \newtheorem{proposition}{Proposition}
  \theoremstyle{definition}
  \newtheorem*{remark}{Remark}
\algrenewcommand\algorithmicthen{\relax}
\algrenewcommand\algorithmicdo{\relax}
\algrenewcommand\algorithmiccomment{\quad$\boldsymbol{/\!/}$\quad}  
\newcommand{\Continue}{\textbf{continue}}
\newcommand{\II}{\hspace{\algorithmicindent}}  
  \title{Large-scale probabilistic predictors with and without guarantees of validity}
  \newcounter{Star}
  \newcounter{Dagger}
  \author{Vladimir Vovk${}^{\fnsymbol{Star}}$,
    Ivan Petej${}^{\fnsymbol{Star}}$,
    and Valentina Fedorova${}^{\fnsymbol{Dagger}}$\\
  ${}^{\fnsymbol{Star}}$Department of Computer Science,
  Royal Holloway, University of London, UK\\
  ${}^{\fnsymbol{Dagger}}$Yandex, Moscow, Russia\\
  \texttt{\{volodya.vovk,ivan.petej,alushaf\}@gmail.com}}
  \title{Large-scale probabilistic prediction with and without validity guarantees\thanks{The conference version of this paper
    is to appear in \emph{Advances in Neural Information Processing Systems 28}, 2015.}}
  \author{Vladimir Vovk, Ivan Petej, and Valentina Fedorova\\
    \texttt{\{volodya.vovk,ivan.petej,alushaf\}{\rm@}gmail.com}}
  \title{Large-scale probabilistic prediction with and without validity guarantees}
  \author{Vladimir Vovk, Ivan Petej, and Valentina Fedorova}
  \newcommand{\No}{13}
\begin{document}
  \maketitle


\begin{abstract}
  This paper studies theoretically and empirically
  a method of turning machine-learning algorithms
  into probabilistic predictors that
  automatically enjoys a property of validity (perfect calibration)
  and is computationally efficient.
  The price to pay for perfect calibration is that these probabilistic predictors
  produce imprecise (in practice, almost precise for large data sets) probabilities.
  When these imprecise probabilities are merged into precise probabilities,
  the resulting predictors, while losing the theoretical property of perfect calibration,
  are consistently more accurate than the existing methods in empirical studies.
  \ifWP

    \bigskip

    \noindent
    The conference version of this paper
    is to appear in \emph{Advances in Neural Information Processing Systems 28}, 2015.
  \fi
\end{abstract}

\section{Introduction}

Prediction algorithms studied in this paper belong to the class of Venn--Abers predictors,
introduced in \cite{\OCMVII}.
They are based on the method of isotonic regression \cite{Ayer/etal:1955}
and prompted by the observation that when applied in machine learning
the method of isotonic regression
often produces miscalibrated probability predictions
(see, e.g., \cite{Jiang/etal:2011,Lambrou/etal:2012});
it has also been reported (\cite{Caruana/NM:2006}, Section~1)
that isotonic regression is more prone to overfitting than Platt's scaling
\cite{Platt:2000} when data is scarce.
The advantage of Venn--Abers predictors is that they are a special case of Venn predictors
(\cite{Vovk/etal:2005book}, Chapter~6),
and so (\cite{Vovk/etal:2005book}, Theorem~6.6) are always well-calibrated
(cf.\ Proposition~\ref{prop:validity} below).
They can be considered to be a regularized version of the procedure
used by \cite{Zadrozny/Elkan:2001}, which helps them resist overfitting.

The main desiderata for Venn (and related conformal, \cite{Vovk/etal:2005book}, Chapter~2) predictors
are validity, predictive efficiency, and computational efficiency.
This paper introduces two computationally efficient versions of Venn--Abers predictors,
which we refer to as inductive Venn--Abers predictors (IVAPs) and cross-Venn--Abers predictors (CVAPs).
The ways in which they achieve the three desiderata are:
\begin{itemize}
\item
  Validity (in the form of perfect calibration) is satisfied by IVAPs automatically,
  and the experimental results reported in this paper suggest that it is inherited by CVAPs.
\item
  Predictive efficiency is determined by the predictive efficiency
  of the underlying learning algorithms
  (so that the full arsenal of methods of modern machine learning
  can be brought to bear on the prediction problem at hand).
\item
  Computational efficiency is, again, determined by the computational efficiency
  of the underlying algorithm;
  the computational overhead of extracting probabilistic predictions consists of sorting
  (which takes time $O(n\log n)$, where $n$ is the number of observations)
  and other computations taking time $O(n)$.
\end{itemize}
An advantage of Venn prediction over conformal prediction,
which also enjoys validity guarantees,
is that Venn predictors output probabilities rather than p-values,
and probabilities, in the spirit of Bayesian decision theory,
can be easily combined with utilities to produce optimal decisions.

\ifFULL\bluebegin
  When the probability $\{p_0,p_1\}$ is imprecise, we can get an uncertain optimal decision;
  in this case, it makes sense to use the minimax principle for choosing
  among the available decisions.
\blueend\fi

\ifFULL\bluebegin
  The reader whose main interest is in probability
  (rather than perfectly calibrated multiprobability) prediction
  might ask why calibration is a useful technique:
  some machine-learning methods (such as logistic regression, naive Bayes, and neural networks)
  are able to output probabilities directly.
  The main reason is that the best (as measured by standard loss functions) results
  are obtained not by those methods but by non-probabilistic predictors
  combined with a calibration method
  (see, e.g., Table~2 in \cite{Caruana/NM:2006}).
\blueend\fi

In Sections~\ref{sec:IVAP} and~\ref{sec:CVAP} we discuss IVAPs and CVAPs, respectively.
Section~\ref{sec:merging} is devoted to minimax ways of merging imprecise probabilities
into precise probabilities
and thus making IVAPs and CVAPs precise probabilistic predictors.

In 
this paper we concentrate on binary classification problems,
in which the objects to be classified are labelled as 0 or 1.
Most of machine learning algorithms are \emph{scoring algorithms},
in that they output a real-valued score for each test object,
which is then compared with a threshold to arrive at a categorical prediction, 0 or 1.
As precise probabilistic predictors, IVAPs and CVAPs
are ways of converting the scores for test objects into numbers in the range $[0,1]$
that can serve as probabilities,
or \emph{calibrating} the scores.
In Section~\ref{sec:comparison} we\ifCONF\ briefly\fi\ discuss two existing calibration methods,
Platt's \cite{Platt:2000} and the method \cite{Zadrozny/Elkan:2001} based on isotonic regression\ifnotCONF,
  and compare them with IVAPs and CVAPs theoretically\fi.
Section~\ref{sec:experiments} is devoted to experimental comparisons
and shows that CVAPs consistently outperform the two existing methods\ifCONF\
  (more extensive experimental studies can be found in \cite{arXiv1511-local})\fi.


\section{Inductive Venn--Abers predictors (IVAPs)}
\label{sec:IVAP}

In this paper we consider data sequences (usually loosely referred to as sets)
consisting of \emph{observations} $z=(x,y)$,
each observation consisting of an \emph{object} $x$ and a \emph{label} $y\in\{0,1\}$;
we only consider binary labels.
We are given a training set whose size will be denoted $l$.

This section introduces inductive Venn--Abers predictors.
Our main concern is how to implement them efficiently,
but as functions, an IVAP is defined in terms of a scoring algorithm
(see the last paragraph of the previous section) as follows:
\begin{itemize}
\item
  Divide the training set of size $l$ into two subsets,
  the \emph{proper training set} of size $m$ and the \emph{calibration set} of size $k$,
  so that $l=m+k$.
\item
  Train the scoring algorithm on the proper training set.
\item
  Find the scores $s_1,\ldots,s_k$ of the calibration objects $x_1,\ldots,x_k$.
\item
  When a new test object $x$ arrives, compute its score $s$.
  Fit isotonic regression to $(s_1,y_1),\ldots,(s_k,y_k),(s,0)$ obtaining a function $f_0$.
  Fit isotonic regression to $(s_1,y_1),\ldots,(s_k,y_k),(s,1)$ obtaining a function $f_1$.
  The multiprobability prediction for the label $y$ of $x$
  is the pair $(p_0,p_1):=(f_0(s),f_1(s))$
  (intuitively, the prediction is that the probability that $y=1$ is either $f_0(s)$ or $f_1(s)$).
\end{itemize}

Notice that the multiprobability prediction $(p_0,p_1)$ output by an IVAP
always satisfies $p_0<p_1$,
and so $p_0$ and $p_1$ can be interpreted as the lower and upper probabilities,
respectively;
in practice, they are close to each other for large training sets.

First we state formally the property of validity of IVAPs
(adapting the approach of \cite{\OCMVII} to IVAPs).
A random variable $P$ taking values in $[0,1]$ is \emph{perfectly calibrated}
(as a predictor) for a random variable $Y$ taking values in $\{0,1\}$
if $\Expect(Y\mid P) = P$ a.s.
A \emph{selector} is a random variable taking values in $\{0,1\}$.
As a general rule,
in this paper
random variables are denoted by capital letters
(e.g., $X$ are random objects and $Y$ are random labels).

\begin{proposition}\label{prop:validity}
  Let $(P_0,P_1)$ be an IVAP's prediction for $X$
  based on a training sequence $(X_1,Y_1),\ldots,(X_l,Y_l)$.
  There is a selector $S$ such that $P_S$ is perfectly calibrated for $Y$
  provided the random observations $(X_1,Y_1),\ldots,(X_l,Y_l),(X,Y)$ are i.i.d.
\end{proposition}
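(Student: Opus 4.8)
The plan is to exhibit the selector explicitly rather than to search for it: I would take $S:=Y$, the (unobserved) true label of the test object. Then $P_S=P_Y=f_Y(s)$ is precisely the value produced by fitting isotonic regression to the calibration pairs together with the test pair carrying its correct label, so it suffices to prove $\Expect(Y\mid P_Y)=P_Y$ a.s. That $S$ depends on the unknown $Y$ is not a problem: the proposition only asserts the \emph{existence} of a perfectly calibrated selector, not its computability.

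First I would condition on the proper training set. Once the $m$ proper training observations are fixed, the underlying algorithm outputs a fixed scoring function $g$, and the remaining $k+1$ observations---the $k$ calibration observations and the test observation $(X,Y)$---are i.i.d.\ and hence exchangeable. Applying the deterministic map $g$ to the objects, the score--label pairs $(s_1,Y_1),\dots,(s_k,Y_k),(s,Y)$, with $s_i=g(X_i)$ and $s=g(X)$, are exchangeable as well.

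The combinatorial core is that isotonic regression depends only on the \emph{multiset} of its input pairs: sorting by score and running pool-adjacent-violators partitions the $k+1$ pairs into consecutive bins, and the fitted value of every pair in a bin is the proportion of label-$1$'s in that bin. Writing $\mathcal B$ for the bag of the $k+1$ score--label pairs, both the partition and its within-bin frequencies are functions of $\mathcal B$ alone (a fixed tie-breaking rule disposes of equal scores and keeps the fitted value at each score well defined). By exchangeability, conditional on $\mathcal B$ and on the proper training set the test pair is uniform over the $k+1$ elements of $\mathcal B$, so $P_Y$ is the within-bin frequency of the bin the test pair falls into. Conditioning further on $P_Y$ restricts the test pair to the union of the bins whose frequency equals $P_Y$; since it is uniform over $\mathcal B$, the conditional expectation of its label is the overall proportion of $1$'s in that union, which is again $P_Y$. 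Thus $\Expect(Y\mid P_Y,\mathcal B,\text{proper training})=P_Y$, and the tower property collapses this to $\Expect(Y\mid P_Y)=P_Y$ a.s.

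I expect the main obstacle to be the permutation-invariance and uniqueness of the isotonic fit in the presence of tied scores, together with the exchangeability bookkeeping: one has to check that conditioning on the bag $\mathcal B$ genuinely renders the test pair uniform over its $k+1$ elements (with multiplicities), and that aggregating over several bins sharing a common fitted value leaves the frequency identity intact.
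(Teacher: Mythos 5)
Your proposal is correct and follows essentially the same route as the paper's own proof: set $S:=Y$, condition on the proper training set and the bag of calibration-plus-test observations, use exchangeability to make the test pair uniform over that bag, and invoke the fact that the isotonic-regression (GCM) fitted value on each level set equals the average label there. Your write-up merely spells out in more detail the aggregation over blocks sharing a common fitted value and the handling of ties, both of which the paper leaves implicit.
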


Our next proposition concerns the computational efficiency of IVAPs;\ifnotCONF\
  both propositions will be proved later in the section\fi\ifCONF\
  Proposition~\ref{prop:validity} will be proved later in this section
  while Proposition~\ref{prop:computational-efficiency} is proved in \cite{arXiv1511-local}\fi.
\begin{proposition}\label{prop:computational-efficiency}
  Given the scores $s_1,\ldots,s_k$ of the calibration objects,
  the prediction rule for computing the IVAP's predictions
  can be computed in time $O(k\log k)$ and space $O(k)$.
  Its application to each test object takes time $O(\log k)$.
  Given the sorted scores of the calibration objects,
  the prediction rule can be computed in time and space $O(k)$.
\end{proposition}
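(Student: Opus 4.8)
The plan is to reduce everything to the sorted case and then exploit the standard geometric description of isotonic regression. Sorting the calibration scores costs $O(k\log k)$ time and $O(k)$ space; everything that follows will run in time and space $O(k)$ on the sorted data, which already establishes the last sentence of the proposition and, together with the sort, the first. So from now on I assume the scores are available as $s_{(1)}\le\cdots\le s_{(k)}$ with their matching labels, and (after merging equal scores into weighted points) I form the cumulative sum diagram: the polygonal path through the points $(i,Y_i)$, where $Y_i:=y_{(1)}+\cdots+y_{(i)}$ and $Y_0:=0$.

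The first observation is that the IVAP output $(f_0(s),f_1(s))$ depends on the test score $s$ only through the gap $(s_{(r)},s_{(r+1)})$, or the tie class $\{s=s_{(r)}\}$, into which $s$ falls, because isotonic regression is invariant under order-preserving reparametrisation of the abscissa: moving $s$ within one gap changes neither the order of the augmented sequence nor, therefore, the fit. Hence there are only $O(k)$ distinct possible predictions, one per gap or tie class, and it suffices to (i) precompute and tabulate all of them in $O(k)$ time and space, and (ii) answer a query by locating the gap containing $s$. For (ii) I would store the breakpoints in a balanced $\BST$ (or simply the sorted array), so each query is a single binary search costing $O(\log k)$.

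For (i) I would use the classical fact that the isotonic regression fit equals the left-to-right slopes of the greatest convex minorant of the cumulative sum diagram, computable by pool-adjacent-violators. Inserting the test point $(s,1)$ at rank $r$ adds the vertex $(r+1,Y_r+1)$ and raises the tail of the diagram by $1$, while inserting $(s,0)$ adds $(r+1,Y_r)$ and leaves the tail unchanged; in either case $f_y(s)$ is the slope of the segment of the new greatest convex minorant straddling the inserted abscissa, determined by the supporting lines from the inserted vertex to the left and right portions of the calibration diagram. As $r$ runs from $0$ to $k$ the inserted vertex moves monotonically, and so do the two tangency points, so I can sweep $r$ from left to right while maintaining the lower convex hull of the scanned prefix on a stack — using \Push for each new cumulant and \Pop while the entries \NextToTop and \Top together with the new point violate convexity (the usual monotone-chain construction). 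Each cumulant is pushed and popped at most once, so the whole sweep, producing the two arrays of per-gap predictions $f_0$ and $f_1$, runs in amortised $O(1)$ per gap, hence $O(k)$ overall and in $O(k)$ space.

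The hard part will be step (i): proving that the relevant tangency points are indeed monotone in $r$, so that the stack-based sweep is correct and amortised constant-time, and handling the two labels and the tie classes simultaneously in the same left-to-right pass. Once this monotonicity is in hand, the amortised analysis is the standard one for incremental lower-hull construction, and correctness reduces to verifying that the slope read off the hull over the interval $[r,r+1]$ equals the fitted value of the augmented isotonic regression at the inserted point — a purely local check at the straddling hull segment.
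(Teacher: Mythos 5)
Your overall architecture is exactly the paper's: sort once in $O(k\log k)$, observe that $(f_0(s),f_1(s))$ depends on $s$ only through its position relative to the distinct calibration scores so that only $O(k)$ predictions need to be tabulated (the paper's vectors $F^0$ and $F^1$), and answer each query by binary search in $O(\log k)$. But the entire substance of the proposition beyond that bookkeeping is the linear-time tabulation, and that is precisely the step you leave open: you rest it on an unproved claim that the two tangency points from the inserted vertex move monotonically with the insertion rank $r$, and you yourself flag this as ``the hard part.'' As written the sketch also has a structural problem: the GCM segment straddling the inserted interval is in general an edge joining a corner of the \emph{prefix} hull to a corner of the \emph{suffix} hull (and need not pass through the inserted vertex at all, which can lie strictly above the minorant), so a single stack holding only the lower hull of the scanned prefix cannot produce the required slope; you would need to maintain and query both sides simultaneously, and the claimed monotonicity would have to be proved for that joint structure.

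The paper avoids the tangency-monotonicity issue entirely. It first runs Graham's scan (Algorithm~\ref{alg:preprocessing-F1}) on the CSD augmented with an extra point $P_{-1}=(-1,-1)$ representing the test interval placed to the \emph{left} of all calibration scores, then sweeps the test interval rightward one position at a time by physically swapping it with the next calibration interval: the reflection $P_{i-1}:=P_{i-2}+P_{i}-P_{i-1}$ in line~\ref{ln:reflect} of Algorithm~\ref{alg:bulk-F1} realises the swap, after which the suffix hull (the stack $S'$ of GCM corners to the right of the active corner) is repaired by the usual pop-while-nonconvex loop. The reported value $F^1_i$ is just the slope of the first edge of that suffix hull at the active corner, so no left-tangency is ever needed, and the amortised $O(k')$ bound is the elementary push/pop count of Lemma~\ref{lem:time} (at most $2k'$ elements are ever pushed). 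To make your route work you would have to either prove the bilateral monotonicity lemma you postulate or switch to something like this swap-and-reflect maintenance of the suffix GCM; without one of these, the $O(k)$ claim --- and hence the proposition --- is not established.
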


Proofs of both statements rely on the geometric representation
of isotonic regression as the slope of the GCM (greatest convex minorant)
of the CSD (cumulative sum diagram):
see \cite{Barlow/etal:1972}, pages 9--13 (especially Theorem~1.1).
To make our exposition more self-contained, we define both GCM and CSD below.

First we explain how to fit isotonic regression to $(s_1,y_1),\ldots,(s_k,y_k)$
(without necessarily assuming that $s_i$ are the calibration scores and $y_i$ are the calibration labels,
which will be needed to cover the use of isotonic regression in IVAPs).
We start from sorting all scores $s_1,\ldots,s_{k}$ in the increasing order
and removing the duplicates.
(This is the most computationally expensive step in our calibration procedure,
$O(k\log k)$ in the worst case.)
Let $k'\le k$ be the number of distinct elements among $s_1,\ldots,s_k$,
i.e., the cardinality of the set $\{s_1,\ldots,s_k\}$.
Define $s'_j$, $j=1,\ldots,k'$, to be the $j$th smallest element of $\{s_1,\ldots,s_k\}$,
so that $s'_1<s'_2<\cdots<s'_{k'}$.
Define $w_j:=\left|\left\{i=1,\ldots,k:s_i=s'_j\right\}\right|$
to be the number of times $s'_j$ occurs among $s_1,\ldots,s_k$.
Finally, define
$$
  y'_j
  :=
  \frac{1}{w_j}
  \sum_{i=1,\ldots,k:s_i=s'_j}
  y_i
$$
to be the average label corresponding to $s_i=s'_j$.



The \emph{CSD} of $(s_1,y_1),\ldots,(s_k,y_k)$ is the set of points
\begin{equation}\label{eq:CSD}
  P_i
  :=
  \left(
    \sum_{j=1}^i
    w_j,
    \sum_{j=1}^i
    y'_j w_j
  \right),
  \quad
  i=0,1,\ldots,k';
\end{equation}
in particular, $P_0=(0,0)$.
The \emph{GCM} is the greatest convex minorant of the CSD.
The value at $s'_i$, $i=1,\ldots,k'$, of the \emph{isotonic regression}
fitted to $(s_1,y_1),\ldots,(s_k,y_k)$ is defined to be the slope of the GCM
between $\sum_{j=1}^{i-1}w_j$ and $\sum_{j=1}^i w_j$;
the values at other $s$ are somewhat arbitrary
(namely, the value at $s\in(s'_{i},s'_{i+1})$ can be set to anything
between the left and right slopes of the GCM at $\sum_{j=1}^i w_j$)
but are never needed in this paper
(unlike in the standard use of isotonic regression in machine learning, \cite{Zadrozny/Elkan:2001}):
e.g., $f_1(s)$ is the value of the isotonic regression fitted to a sequence that already contains $(s,1)$.

\begin{proof}[Proof of Proposition~\ref{prop:validity}]
  Set $S:=Y$.
  The statement of the proposition even holds conditionally
  on knowing the values of $(X_1,Y_1),\ldots,(X_m,Y_m)$
  and the multiset $\lbag(X_{m+1},Y_{m+1}),\ldots,(X_l,Y_l)$,  
  $(X,Y)\rbag$;
  this knowledge allows us to compute the scores $\lbag s_1,\ldots,s_k,s\rbag$
  of the calibration objects $X_{m+1},\ldots,X_l$ and the test object $X$.
  The only remaining randomness
  is over the equiprobable permutations of $(X_{m+1},Y_{m+1}),\ldots,(X_l,Y_l),(X,Y)$;
  in particular, $(s,Y)$ is drawn randomly
  from the multiset $\lbag(s_{1},Y_{m+1}),\ldots,(s_k,Y_l),(s,Y)\rbag$.
  It remains to notice that, according to the GCM construction,
  the average label of the calibration and test observations
  corresponding to a given value of $P_S$ is equal to $P_S$.
  \Extra{The last observation is stated, in a more general case,
    as Theorem~1.3.5 in \cite{Robertson/etal:1988}.}
\end{proof}

\ifFULL\bluebegin
  \begin{remark}
    A more standard approach would be to show that IVAPs are (inductive) Venn predictors
    and then use the general validity result for Venn predictors
    (as we did in \cite{\OCMVII}),
    but in this paper we are using a shortcut avoiding defining inductive Venn predictors.
  \end{remark}
\blueend\fi

The idea behind computing the pair $(f_0(s),f_1(s))$ efficiently
is to pre-compute two vectors $F^0$ and $F^1$ storing $f_0(s)$ and $f_1(s)$,
respectively,
for all possible values of $s$.
Let $k'$ and $s'_i$ be as defined above
in the case where $s_1,\ldots,s_k$ are the calibration scores
and $y_1,\ldots,y_k$ are the corresponding labels.
The vectors $F^0$ and $F^1$ are of length $k'$,
and for all $i=1,\ldots,k'$ and both~$\epsilon\in\{0,1\}$,
$F^{\epsilon}_{i}$ is the value of $f_{\epsilon}(s)$ when $s=s'_i$.
Therefore, for all $i=1,\ldots,k'$:
\begin{itemize}
\item
  $F^{1}_{i}$ is also the value of $f_{1}(s)$ when $s$ is just to the left of $s'_i$;
\item
  $F^{0}_{i}$ is also the value of $f_{0}(s)$ when $s$ is just to the right of $s'_i$.
\end{itemize}
Since $f_0$ and $f_1$ can change their values only at the points $s'_i$,
the vectors $F^0$ and $F^1$ uniquely determine the functions $f_0$ and $f_1$,
respectively.
\ifCONF
  For details of computing $F^0$ and $F^1$, see \cite{arXiv1511-local}.
\fi

\begin{remark}
  There are several algorithms
  for performing isotonic regression on a partially, rather than linearly, ordered set:
  see, e.g., \cite{Barlow/etal:1972}, Section~2.3
  (although one of the algorithms described in that section, the Minimax Order Algorithm,
  was later shown to be defective \cite{Lee:1983,Murray:1983}).
  Therefore, IVAPs (and CVAPs below) can be defined in the situation
  where scores take values only in a partially ordered set;
  moreover, Proposition~\ref{prop:validity} will continue to hold.
  (For the reader familiar with the notion of Venn predictors
  we could also add that Venn--Abers predictors will continue to be Venn predictors,
  which follows from the isotonic regression being the average of the original function
  over certain equivalence classes.)
  The importance of partially ordered scores stems from the fact that they enable us
  to benefit from a possible ``synergy'' between two or more prediction algorithms
  \cite{Vapnik:2015Yandex}.
  Suppose, e.g., that one prediction algorithm outputs (scalar) scores $s^1_1,\ldots,s^1_k$ for the calibration objects $x_1,\ldots,x_k$ 
  and another outputs $s^2_1,\ldots,s^2_k$ for the same calibration objects;
  we would like to use both sets of scores.
  We could merge the two sets of scores into composite vector scores,
  $s_i:=(s^1_i,s^2_i)$, $i=1,\ldots,k$,
  and then classify a new object $x$ as described earlier using its composite score $s:=(s^1,s^2)$,
  where $s^1$ and $s^2$ are the scalar scores computed by the two algorithms
  and the partial order between composite scores is defined as usual,
  $$
    (s^1,s^2)\preceq(t^1,t^2)
    \Longleftrightarrow
    (s^1\le t^1)\;\&\;(s^2\le t^2).
  $$
  Preliminary results reported in \cite{Vapnik:2015Yandex} in a related context
  suggest that the resulting predictor can outperform predictors based on the individual scalar scores.
  However, we will not pursue this idea further in this paper.
\end{remark}

\ifnotCONF
\subsection*{Computational details of IVAPs}

Let $k'$, $s'_i$, and $w_i$ be as defined above
in the case where $s_1,\ldots,s_k$ and $y_1,\ldots,y_k$ are the calibration scores and labels.
The \emph{corners} of a GCM are the points on the GCM where the slope of the GCM changes.
It is clear that the corners belong to the CSD,
and we also add the extreme points ($P_0$ and $P_{k'}$ in the case of \eqref{eq:CSD})
of the CSD to the list of corners.

We will only explain in detail how to compute $F^1$;
the computation of $F^0$ is analogous and will be explained only briefly.
First we explain how to compute~$F^1_1$.

Extend the CSD as defined above
(in the case where $s_1,\ldots,s_k$ and $y_1,\ldots,y_k$ are the calibration scores and labels)
by adding the point $P_{-1}:=(-1,-1)$.
The corresponding GCM will be referred to as the \emph{initial GCM};
it has at most $k'+2$ corners.
Algorithm~\ref{alg:preprocessing-F1}, which operates with a stack $S$ (initially empty),
computes the corners;
it is a trivial modification of Graham's scan
(\cite{Graham:1972}; \cite{Cormen/etal:2009}, Section~33.3).
The corners are returned on the stack $S$,
and they are ordered from left to right
($P_{-1}$ being at the bottom of $S$ and $P_{k'}$ at the top).
The operator ``and'' in line~\ref{ln:and} is, as usual, short circuiting.
The expression ``the angle formed by points $a$, $b$, and $c$
makes a nonleft (resp.\ nonright) turn''
may be taken to mean that $(b-a)\times(c-b)\le0$ (resp.\ ${}\ge0$),
where $\times$ stands for cross product of planar vectors;
this avoids computing angles and divisions
(see, e.g., \cite{Cormen/etal:2009}, Section~33.1).

\begin{algorithm}[b]  
  \caption{Initializing the corners for computing $F^1$}
  \label{alg:preprocessing-F1}
  \begin{algorithmic}[1]
    \State $\Push(P_{-1},S)$
    \State $\Push(P_{0},S)$
    \For{$i\in\{1,2,\ldots,k'\}$}
      \While{$S.\text{size}>1$ and the angle formed by points
          \Statex\II\II\II $\NextToTop(S)$,  
          $\Top(S)$, and $P_i$
          \Statex\II\II\II makes a nonleft turn}\label{ln:and}
        \State $\Pop(S)$
      \EndWhile
      \State $\Push(P_i,S)$
    \EndFor
    \State\Return $S$
  \end{algorithmic}
\end{algorithm}

Algorithm~\ref{alg:preprocessing-F1} allows us to compute $F^1_1$
as the slope of the line between the two bottom corners in $S$,
but this will be done by the next algorithm.



\begin{algorithm}[bt]
  \caption{Computing $F^1$}
  \label{alg:bulk-F1}
  \begin{algorithmic}[1]
    \While{$\lnot\StackEmpty(S)$}\label{ln:start-copying}
      \State $\Push(\Pop(S),S')$\label{ln:end-copying}
    \EndWhile
    \For{$i\in\{1,2,\ldots,k'\}$}\label{ln:start-loop}
      \State set $F^1_{i}$ to the slope of  
          \Statex\II\II\II $\overrightarrow{\Top(S'),\NextToTop(S')}$\label{ln:report}
      \State $P_{i-1} = P_{i-2} + P_{i} - P_{i-1}$\label{ln:reflect}
      \If{$P_{i-1}$ is at or above $\overrightarrow{\Top(S'),\NextToTop(S')}$}
        \State\Continue
      \EndIf
      \State $\Pop(S')$
      \While{$S'.\text{size}>1$ and the angle formed by points  
          \Statex\II\II\II  $P_{i-1}$, $\Top(S')$, and $\NextToTop(S')$
          \Statex\II\II\II makes a nonleft turn}
        \State $\Pop(S')$
      \EndWhile
      \State $\Push(P_{i-1},S')$
    \EndFor\label{ln:end-loop}
    \State\Return $F^1$
  \end{algorithmic}
\end{algorithm}

The rest of the procedure for computing the vector $F^1$
is shown as Algorithm~\ref{alg:bulk-F1}.
The main data structure in Algorithm~\ref{alg:bulk-F1} is a stack $S'$,
which is initialized (in lines~\ref{ln:start-copying}--\ref{ln:end-copying})
by putting in it all corners of the initial GCM
in reverse order as compared with $S$
(so that $P_{-1}=(-1,-1)$ is initially at the top of $S'$).

At each point in the execution of Algorithm~\ref{alg:bulk-F1}
we will have a length-1 \emph{active interval}
and the \emph{active corner},
which will nearly always be at the top of the stack $S'$.
The initial CSD can be visualized by connecting each pair of adjacent points:
$P_{-1}$ and $P_0$, $P_0$ and $P_1$, etc.
It stretches over the interval $[-1,k']$ of the horizontal axis;
the subinterval $[-1,0]$ corresponds to the test score $s$
(assumed to be to the left of all $s'_i$)
and each subinterval $\left[\sum_{j=1}^{i-1}w_j,\sum_{j=1}^{i}w_j\right]$
corresponds to the calibration score $s'_i$,
$i=1,\ldots,k'$.
The active corner is initially at $P_{-1}=(-1,-1)$;
the corners to the left of the active corner are irrelevant and ignored
(not remembered in $S'$).
The active interval is always between the first coordinate of $\Top(S')$
and the first coordinate of $\NextToTop(S')$.
At each iteration $i=1,\ldots,k'$ of the main loop \ref{ln:start-loop}--\ref{ln:end-loop}
we are computing $F^1_{i}$, i.e., $f_1(s)$ for the situation
where $s$ is between $s'_{i-1}$ and $s'_{i}$ (meaning to the left of $s'_1$ if $i=1$),
and after that we swap the active interval (corresponding to $s$)
and the interval corresponding to $s'_i$;
of course, after swapping pieces of CSD are adjusted vertically
in order to make the CSD as a whole continuous.

At the beginning of each iteration $i$ of the loop \ref{ln:start-loop}--\ref{ln:end-loop}
we have the CSD
\begin{equation}\label{eq:start-CSD}
  P_{-1},P_0,P_1,\ldots,P_{k'}
\end{equation}
corresponding to
\begin{align*}
  \text{the points } & s'_1,\ldots, s'_{i-1},s,s'_i,s'_{i+1},\ldots,s'_{k'}\\
  \text{with the weights } & w_1,\ldots,w_{i-1},1,w_i,w_{i+1},\ldots,w_{k'} 
\end{align*}
(respectively);  
the active interval is the projection of $\overrightarrow{P_{i-2},P_{i-1}}$
(onto the horizontal axis, here and later).
At the end of that iteration
we have the CSD which looks identical to \eqref{eq:start-CSD}
but in fact contains a different point $P_{i-1}$ (cf.\ line~\ref{ln:reflect} of the algorithm)
and corresponds to
\begin{align*}
  \text{the points } & s'_1,\ldots,s'_{i-1},s'_i,s,s'_{i+1},\ldots,s'_{k'}\\
  \text{with the weights } & w_1,\ldots,w_{i-1},w_i,1,w_{i+1},\ldots,w_{k'} 
\end{align*}
(respectively);  
the active interval becomes the projection of $\overrightarrow{P_{i-1},P_{i}}$.
To achieve this,
in line~\ref{ln:reflect} we redefine $P_{i-1}$ to be the reflection of the old $P_{i-1}$
across the mid-point $(P_{i-2}+P_{i})/2$.
The stack $S'$ always consists of corners of the GCM of the current CSD,
and it contains all the corners to the right of the active interval
(plus one more corner, which is the active corner).

At each iteration $i$ of the loop \ref{ln:start-loop}--\ref{ln:end-loop}:
\begin{itemize}
\item
  We report the slope of the GCM over the active interval as $F^1_{i}$
  (line~\ref{ln:report}).
\item
  We then swap the fragments of the CSD corresponding to the active interval and to $s'_i$
  leaving the rest of the CSD intact.
  This way the active interval moves to the right
  (from the projection of $\overrightarrow{P_{i-2},P_{i-1}}$
  to the projection of $\overrightarrow{P_{i-1},P_{i}}$).
\item
  If the point $P_{i-1}$ above the left end-point of the active interval
  is above (or at) the GCM,
  move to the next iteration of the loop.
  (The active corner does not change.)
  The rest of this description assumes that $P_{i-1}$ is strictly below.
\item
  Make $P_{i-1}$
  the active corner.
  Redefine the GCM to the right of the active corner
  by connecting the active corner
  to the right-most corner $C$ such that the slope of the line connecting the active corner and that corner is minimal;
  all the corners between the active corner and that right-most corner $C$
  are then forgotten.
\end{itemize}  

\begin{lemma}\label{lem:time}
  The worst-case computation time of Algorithms~\ref{alg:preprocessing-F1} and~\ref{alg:bulk-F1}
  is $O(k')$.
\end{lemma}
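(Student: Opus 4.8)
The plan is to prove the $O(k')$ bound for each algorithm separately, in both cases by an aggregate (amortized) analysis of the stack operations, exactly in the style of the standard running-time analysis of Graham's scan. For Algorithm~\ref{alg:preprocessing-F1} I would first note that every primitive used inside the loops runs in constant time: the turn test is a single cross-product evaluation $(b-a)\times(c-b)\le0$ on three fixed points, and the stack operations \Push{} and \Pop{} are $O(1)$. The outer \textbf{for} loop executes exactly $k'$ times, contributing $O(k')$ by itself. The only potentially expensive part is the inner \textbf{while} loop, and I would bound its total number of iterations over the whole run by a counting argument: each of the $k'+2$ points $P_{-1},P_0,P_1,\dots,P_{k'}$ is pushed onto $S$ exactly once, and every pass through the \textbf{while} body performs one \Pop; since a point can be popped at most once after it has been pushed, the aggregate number of \Pop{} operations is at most the aggregate number of \Push{} operations, i.e.\ $k'+2=O(k')$. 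Adding the $O(k')$ cost of the \textbf{for} loop to the $O(k')$ aggregate \textbf{while}-loop cost yields $O(k')$.

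For Algorithm~\ref{alg:bulk-F1} I would treat the two phases in turn. The initialisation (lines~\ref{ln:start-copying}--\ref{ln:end-copying}) merely transfers $S$ into $S'$; since the initial GCM has at most $k'+2$ corners we have $|S|\le k'+2$, so this phase costs $O(k')$. In the main loop (lines~\ref{ln:start-loop}--\ref{ln:end-loop}) every step outside the inner \textbf{while} loop is again $O(1)$: reading the two top corners, evaluating the slope in line~\ref{ln:report}, forming the reflected point in line~\ref{ln:reflect}, and the above-or-below test are all fixed arithmetic on coordinates, so the $k'$ iterations contribute $O(k')$. The remaining cost is that of the \Pop{} operations, counting both the single \Pop{} that precedes the inner \textbf{while} loop and the \Pop{} operations inside it. I would charge each such \Pop{} against an earlier \Push{}: the corners that ever reside in $S'$ come only from the at most $k'+2$ corners inserted during initialisation and from the at most one corner \Push$(P_{i-1},S')$ added at the foot of each iteration, so the total number of \Push{} operations over the whole execution is at most $(k'+2)+k'=O(k')$. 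As no stack element is popped more than once after being pushed, the total number of \Pop{} operations is therefore also $O(k')$, and combining this with the $O(k')$ per-iteration work gives the claimed bound.

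The step I expect to need the most care is the amortized accounting for Algorithm~\ref{alg:bulk-F1}: one must check that the main loop really inserts at most one new corner per iteration and that a corner never re-enters $S'$ once it has been discarded, so that the ``popped at most once'' invariant genuinely holds---here the fact that the pushed corners $P_{i-1}$ carry strictly increasing indices $i$ is what rules out re-insertion. Everything else reduces to the routine observation that the geometric primitives---slope, cross-product turn test, and midpoint reflection---are all constant-time.
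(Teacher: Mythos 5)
Your proposal is correct and follows essentially the same route as the paper: the paper handles Algorithm~\ref{alg:preprocessing-F1} by citing the standard Graham's-scan analysis (which you simply spell out), and for Algorithm~\ref{alg:bulk-F1} it gives exactly your aggregate argument---the total number of \textbf{while}-loop iterations is bounded by the total number of elements ever pushed onto $S'$, which is at most $O(k')$ from the initial transfer plus at most one push per iteration of the main loop. The extra care you flag about corners never re-entering the stack is a fair point of rigour, but it does not change the argument.
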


\begin{proof}
  In the case of Algorithm~\ref{alg:preprocessing-F1},
  see \cite{Cormen/etal:2009}, Section~33.3.
  In the case of Algorithm~\ref{alg:bulk-F1},
  it suffices to notice that the total number of iterations for the \textbf{while} loop
  does not exceed the total number of elements pushed onto $S'$
  (since at each iteration we pop an element off $S'$);
  and the total number of elements pushed onto $S'$
  is at most $k'$ (in the first \textbf{for} loop)
  plus $k'$ (in the second \textbf{for} loop).
\end{proof}

  For convenience of the reader wishing to program IVAPs and CVAPs,
  we also give the counterparts of Algorithms~\ref{alg:preprocessing-F1} and~\ref{alg:bulk-F1}
  for computing $F^0$:
  see Algorithms~\ref{alg:preprocessing-F0} and~\ref{alg:bulk-F0} below.
  In those algorithms,
  we do not need the point $P_{-1}$ anymore;
  however, we need a new point $P_{k'+1}:=P_{k'}+(1,1)$.
  The stacks $S$ and $S'$ that they use are initially empty.

  \begin{algorithm}[bt]
    \caption{Initializing the corners for computing $F^0$}
    \label{alg:preprocessing-F0}
    \begin{algorithmic}[1]
      \State $\Push(P_{k'+1},S)$
      \State $\Push(P_{k'},S)$
      \For{$i\in\{k'-1,k'-2,\ldots,0\}$}
        \While{$S.\text{size}>1$ and the angle formed by points $\NextToTop(S)$,
            \Statex\II\II\II $\Top(S)$, and $P_i$ makes a nonright turn}
          \State $\Pop(S)$
        \EndWhile
        \State $\Push(P_i,S)$
      \EndFor
      \State\Return $S$
    \end{algorithmic}
  \end{algorithm}

  \begin{algorithm}[bt]
    \caption{Computing $F^0$}
    \label{alg:bulk-F0}
    \begin{algorithmic}[1]
      \While{$\lnot\StackEmpty(S)$}
        \State $\Push(\Pop(S),S')$
      \EndWhile
      \For{$i\in\{k',k'-1,\ldots,1\}$}
        \State set $F^0_{i}$ to the slope of $\overrightarrow{\Top(S'),\NextToTop(S')}$
        \State $P_{i} = P_{i-1} + P_{i+1} - P_{i}$
        \If{$P_{i}$ is at or above $\overrightarrow{\Top(S'),\NextToTop(S')}$}
          \State\Continue
        \EndIf
        \State $\Pop(S')$
        \While{$S'.\text{size}>1$ and the angle formed by points $P_{i}$,
            \Statex\II\II\II $\Top(S')$, and $\NextToTop(S')$ makes a nonright turn}
          \State $\Pop(S')$
        \EndWhile
        \State $\Push(P_{i},S')$
      \EndFor
      \State\Return $F^0$
    \end{algorithmic}
  \end{algorithm}

  Alternatively, we could use the algorithm for computing $F^1$
  in order to compute $F^0$,
  since, for all $i\in\{1,\ldots,k'\}$,
  \begin{multline*}
    F^0_i
    \left(
      s'_1,\ldots,s'_{k'},
      w_1,\ldots,w_{k'},
      y'_1,\ldots,y'_{k'}
    \right)\\
    =
    1
    -
    F^1_i
    \bigl(
      -s'_1,\ldots,-s'_{k'},
      w_1,\ldots,w_{k'},    
      1-y'_1,\ldots,1-y'_{k'}
    \bigr),
  \end{multline*}
  where the dependence on various parameters is made explicit.

After computing $F^0$ and $F^1$ we can arrange the calibration scores $s'_1,\ldots,s'_{k'}$
into a binary search tree: see Algorithm~\ref{alg:BST},
where $F^0_0$ is defined to be $0$ and $F^1_{k'+1}$ is defined to be $1$;
we will refer to $s'_i$ as the \emph{keys} of the corresponding nodes
(only internal nodes will have keys).
Algorithm~\ref{alg:BST} is in fact more general than what we need:
it computes the binary search tree for the scores $s'_a,s'_{a+1},\ldots,s'_b$
for $a\le b$;
therefore, we need to run $\BST(1,k')$.
The size of the binary search tree is $2k'+1$;
$k'$ of its nodes are internal nodes corresponding to different values of $s'_i$,
$i=1,\ldots,k'$,
and the other $k'+1$ of its nodes are leaves corresponding to the $k'+1$ intervals
formed by the points $s'_1,\ldots,s'_{k'}$.

\ifFULL\bluebegin
  Let us compute the function $F(k')$ giving the size of the BST.
  Start from
  \begin{align*}
    F(1)&=3\\
    F(2)&=5.
  \end{align*}
  The inductive step is
  \begin{align*}
    F(2n)&=1+F(n)+F(n-1)\\
    F(2n+1)&=1+2F(n).
  \end{align*}
  It is easy to check that the solution is $F(k)=2k+1$.
\blueend\fi

\begin{algorithm}[bt]
  \caption{$\BST(a,b)$ (to create the binary search tree, run $\BST(1,k')$)}
  \label{alg:BST}
  \begin{algorithmic}[1]
    \If{$b=a$}
      \State construct the binary tree \Statex\II whose root has key $s'_a$ and payload $\{F^0_a,F^1_a\}$,
      \Statex\II left child is a leaf with payload $\{F^0_{a-1},F^1_a\}$,
      \Statex\II and right child is a leaf with payload $\{F^0_{a},F^1_{a+1}\}$
      \State\Return its root
    \ElsIf{$b=a+1$}
      \State construct the binary tree \Statex\II whose root has key $s'_a$ and payload $\{F^0_a,F^1_a\}$,
      \Statex\II left child is a leaf with payload $\{F^0_{a-1},F^1_a\}$,
      \Statex\II and right child is $\BST(b,b)$
      \State\Return its root
    \ElsIf
      \State $c=\lfloor(a+b)/2\rfloor$
      \State construct the binary tree \Statex\II whose root has key $s'_c$ and payload $\{F^0_c,F^1_c\}$,
      \Statex\II left child is $\BST(a,c-1)$, \Statex\II and right child is $\BST(c+1,b)$
      \State\Return its root
    \EndIf
  \end{algorithmic}
\end{algorithm}

Once we have the binary search tree it is easy to compute the prediction for a test object $x$
in time logarithmic in $k'$:
see Algorithm~\ref{alg:IVAP},
which passes $x$ through the tree and uses $N$ to denote the current node.
Formally, we give the test object $x$, the proper training set $T'$, and the calibration set $T''$
as the inputs of Algorithm~\ref{alg:IVAP};
however, the algorithm uses for prediction
the binary search tree built from $T'$ and $T''$,
and the bulk of work is done in Algorithms~\ref{alg:preprocessing-F1}--\ref{alg:BST}.

\begin{algorithm}[bt]
  \caption{$\IVAP(T',T'',x)$\Comment{inductive Venn--Abers predictor}}  
  \label{alg:IVAP}
  \begin{algorithmic}[1]
    \State set $N$ to the root of the binary search tree and compute the score $s$ of $x$
    \While{$N$ is not a leaf}
      \If{$s<\key(N)$}
        \State set $N$ to $N$'s left child
      \ElsIf{$s>\key(N)$}
        \State set $N$ to $N$'s right child
      \Else\Comment{if $s=\key(N)$}
        \State\Return $\payload(N)$
      \EndIf
    \EndWhile
    \State\Return $\payload(N)$
  \end{algorithmic}
\end{algorithm}

The worst-case computational complexity of the overall procedure
involves the following components:
\begin{itemize}
\item
  Training the algorithm on the proper training set,
  computing the scores of the calibration objects,
  and computing the scores of the test objects;
  at this stage the computation time is determined by the underlying algorithm.
\item
  Sorting the scores of the calibration objects
  takes time $O(k\log k)$.
\item
  Running our procedure for pre-computing $f_0$ and $f_1$
  takes time $O(k)$
  (by Lemma~\ref{lem:time}).
\item
  Processing each test object takes an additional time of $O(\log k)$
  (using binary search).
\end{itemize}
In principle, using binary search does not require an explicit construction of a binary search tree
(cf.\ \cite{Cormen/etal:2009}, Exercise~2.3-5),
but once we have a binary search tree we can easily transform it into a red-black tree,
which allows us to add new observations to (and remove old observations from)
the calibration set in time $O(\log k)$
(\cite{Cormen/etal:2009}, Chapter~13).
\fi  

\ifFULL\bluebegin
  A very useful version of PAVA:
  in the form of ``Up-and-Down-Blocks'',
  as explained in \cite{Barlow/etal:1972}, Section~2.3, especially Figure~2.2;
  see \cite{Tibshirani/etal:2011}, first page, for a much more intuitive description
  of this version of PAVA.
  In terms of CSD and GCM, this version of PAVA is just Graham's scan
  (see, e.g., \cite{Cormen/etal:2009}, Section~33.3).
\blueend\fi

\section{Cross Venn--Abers predictors (CVAPs)}
\label{sec:CVAP}

A CVAP is just a combination of $K$ IVAPs,
where $K$ is the parameter of the algorithm.
It is described as Algorithm~\ref{alg:CVAP},
where $\IVAP(A,B,x)$ stands for the output of IVAP applied to $A$ as proper training set,
$B$ as calibration set, and $x$ as test object,
and $\GM$ stands for geometric mean
(so that $\GM(p_1)$ is the geometric mean of $p_1^1,\ldots,p_1^K$
and $\GM(1-p_0)$ is the geometric mean of $1-p_0^1,\ldots,1-p_0^K$).
The folds should be of approximately equal size,
and usually the training set is split into folds at random
(although we choose contiguous folds in Section~\ref{sec:experiments}
to facilitate reproducibility).
One way to obtain a random assignment of the training observations to folds
(see line \ref{ln:split}) is to start from a regular array
in which the first $l_1$ observations are assigned to fold~1,
the following $l_2$ observations are assigned to fold~2,
up to the last $l_K$ observations which are assigned to fold~$K$,
where $\left|l_k-l/K\right|<1$ for all $k$,
and then to apply a random permutation.
Remember that the procedure \textsc{Randomize-in-Place}
(\cite{Cormen/etal:2009}, Section~5.3)
can do the last step in time $O(l)$.
See the next section for a justification of the expression $\GM(p_1)/(\GM(1-p_0)+\GM(p_1))$
used for merging the IVAPs' outputs.

\begin{algorithm}[bt]
  \caption{$\CVAP(T,x)$\Comment{cross-Venn--Abers predictor for training set $T$}}  
  \label{alg:CVAP}
  \begin{algorithmic}[1]
    \State\label{ln:split} split the training set $T$ 
        into $K$ folds $T_1,\ldots,T_K$ 
    \For{$k\in\{1,\ldots,K\}$}
      \State $(p_0^k,p_1^k):=\IVAP(T\setminus T_k,T_k,x)$
    \EndFor
    \State\Return $\GM(p_1)/(\GM(1-p_0)+\GM(p_1))$
  \end{algorithmic}
\end{algorithm}

\ifFULL\bluebegin
  We have no theoretical guarantees of validity for CVAPs,
  but in Section~\ref{sec:experiments} we might draw calibration pictures
  showing empirical calibration (perhaps inherited from its component IVAPs).

  To complete the definition of CVAPs,
  suppose the underlying algorithm requires parameter tuning.
  Following \cite{Caruana/NM:2006},
  we can perform parameter tuning on the same hold-out folds
  that are used for calibration.
  But we need to explore empirically whether the calibration of CVAPs suffers
  when we do so (which it might well do).
\blueend\fi

\ifFULL\bluebegin
\subsection*{Alternative method to try for CVAPs}

  We could also use isotonic regression for partially ordered sets
  for merging upper and lower probabilities (separately the former and the latter).
  Each fold can be represented as a linearly ordered (by their scores) chain of objects in that fold
  extended adding the test object (if it is not in the chain already).
  Since the test object is in each chain,
  we get a partial order like the one shown in Figure~\ref{fig:partial-order}
  (the order being represented by the arrows;
  we will refer to its direction informally as bottom up).
  There are two options: the weight of the new observation with a postulated label can be $1$,
  or it can be equal to the number $K$ of folds (as in the usual CVAPs).
  We consider only the case of postulated label $1$.

  \begin{figure}[tb]
    \begin{center}
      \setlength{\unitlength}{1.5mm}
      \begin{picture}(50,50)(-5,-5)
        \put(0,0){\vector(1,1){40}}  
        \put(10,0){\vector(1,2){20}}  
        \put(20,0){\vector(0,1){40}}  
        \put(30,0){\vector(-1,2){20}}  
        \put(40,0){\vector(-1,1){40}}  
        \multiput(4,4)(2,2){18}{\circle*{1}}  
        \multiput(11,2)(1,2){19}{\circle*{1}}  
        \multiput(20,0)(0,2){20}{\circle*{1}}  
        \multiput(28,4)(-1,2){17}{\circle*{1}}  
        \multiput(38,2)(-2,2){18}{\circle*{1}}  
        \put(0,-3){\makebox[0pt]{1A}}  
        \put(10,-3){\makebox[0pt]{2A}}  
        \put(20,-3){\makebox[0pt]{3A}}  
        \put(30,-3){\makebox[0pt]{4A}}  
        \put(40,-3){\makebox[0pt]{5A}}  
        \put(0,41){\makebox[0pt]{5B}}  
        \put(10,41){\makebox[0pt]{4B}}  
        \put(20,41){\makebox[0pt]{3B}}  
        \put(30,41){\makebox[0pt]{2B}}  
        \put(40,41){\makebox[0pt]{1B}}  
      \end{picture}
    \end{center}
    \caption{A typical partial order arising in the case of 5 folds}
    \label{fig:partial-order}
  \end{figure}

  Suppose isotonic regression has been performed on each chain
  without taking into account the test object with the postulated label;
  this can be done before the prediction stage starts.
  In this way each chain has been divided into blocks
  (level sets of the isotonic regression),
  which we will call \emph{level 1 blocks}.
  Replace each block $B$ containing or straddling the test object by the blocks
  that are the level sets of the isotonic regression on the part of the block $B$ above the test object
  or the level sets of the isotonic regression on the part of the block $B$ below the test object;
  we will call them \emph{level 2 blocks}.
  Finally, the test object itself (at the centre of Figure~\ref{fig:partial-order})
  is treated as another block, called the \emph{central block}.
  Let $g$ be the function assigning to each block the average label in this block
  (by an average we always mean the weighted average);
  we are interested in the closest isotonic function $g^*$.
  We can pool each block:
  this follows directly from Theorems~2.5 [a later remark: Theorem~2.5 is not applicable since elements of a block are not always poolable]
  and~2.6(i) of \cite{Barlow/etal:1972}
  and the PAVA (\cite{Barlow/etal:1972}, Section~1.2).
  Therefore, in Figure~\ref{fig:partial-order}
  we can assume that each point represents a block,
  with the central block consisting of the test object only
  (the weight of the central block is the number of times the test object occurs in the training set
  plus 1 or $K$).
  Notice that the resulting structure then satisfies the following properties:
  \begin{itemize}
  \item
    The function $g$ is strictly increasing over each \emph{lower tentacle},
    i.e., linearly ordered chain below the central block.
    (This is obvious for the level 1 blocks in the lower tentacle
    and for the level 2 blocks in the lower tentacle;
    to see that the value of $g$ on the level 2 blocks is greater
    than its value on the level 1 blocks
    it suffices to apply Theorem~2.4 in~\cite{Barlow/etal:1972} to the original chain
    corresponding to that fold.)
  \item
    Similarly,
    the function $g$ is strictly increasing over each linearly ordered chain above the central block.
  \end{itemize}
  The main part of the rest of the algorithm is shown as Algorithm~\ref{alg:centre},
  which arranges the \emph{centre} of the partial order in Figure~\ref{fig:partial-order},
  where the centre is defined as the central block $C$ plus its neighbours
  (referred to simply as \emph{neighbours} in Algorithm~\ref{alg:centre}).
  Algorithm~\ref{alg:centre} makes the function $g$ isotonic over the centre.
  If this changes the value of $g$ at some neighbour $N$ of the central block,
  the values of $g$ over the \emph{tentacle of $N$}
  (i.e., the linear chain emanating from $N$ away from the central block)
  are replaced by the corresponding isotonic regression.
  After making $g$ isotonic over each tentacle,
  we check whether it remains isotonic over the centre.
  If yes, we are done, and if not, we repeat Algorithm~\ref{alg:centre},
  and so on, until $g$ is isotonic both over the centre and over all tentacles.
  The correctness of the overall algorithm follows from the results in~\cite{Barlow/etal:1972}
  mentioned earlier and the Minimum Lower Sets algorithm
  (see \cite{Barlow/etal:1972}, Theorem~2.7).

  \begin{algorithm}[bt]
    \caption{Isotonising the centre}
    \label{alg:centre}
    \begin{algorithmic}[1]
      \While{$g$ is not isotonic over the centre}
        \State set $N$ to an upper neighbour with the smallest value of $g$
        \If{$g(N)<g(C)$}
          \State
            pool $C$ and $N$ into a new $C$
        \EndIf
        \State set $N$ to a lower neighbour with the largest value of $g$
        \If{$g(N)>g(C)$}
          \State
            pool $C$ and $N$ into a new $C$
        \EndIf
      \EndWhile
    \end{algorithmic}
  \end{algorithm}

  Assuming the number $K$ of folds to be constant,
  the worst-case computation time of the overall algorithm is $O(l)$ (per test object),
  where $l$ is the size of the training set,
  but it is clear that in practice we can expect the algorithm to run much faster.
\blueend\fi

\section{Making probability predictions out of multiprobability ones}
\label{sec:merging}

In CVAP (Algorithm~\ref{alg:CVAP})
we merge the $K$ multiprobability predictions output by $K$ IVAPs.
In this section we design a minimax way
for merging them,
essentially following \cite{\OCMVII}.
For the log-loss function the result is especially simple,
$\GM(p_1)/(\GM(1-p_0)+\GM(p_1))$.
\ifFULL\bluebegin
  The deficiency of guaranteed calibration:
  $\log(\GM(1-p_0)+\GM(p_1))\in[0,1]$ (for binary log).
  We need to check how small it is.
\blueend\fi

\ifnotCONF
\begin{remark}
  Notice that the probability interval $(1-\GM(1-p_0),\GM(p_1))$ (formally, a pair of numbers)
  is narrower than the corresponding interval for the arithmetic means;
  this follows from the fact that a geometric mean never exceeds the corresponding arithmetic mean
  and that we always have $p_0<p_1$.
\end{remark}
\fi

Let us check that $\GM(p_1)/(\GM(1-p_0)+\GM(p_1))$
is indeed the minimax expression under log loss.
Suppose the pairs of lower and upper probabilities to be merged
are $(p^1_0,p^1_1),\ldots,(p^K_0,p^K_1)$
and the merged probability is $p$.
The extra cumulative loss suffered by $p$
over the correct members $p^1_1,\ldots,p^K_1$ of the pairs
when the true label is $1$ is
\begin{equation}\label{eq:1}
  \log\frac{p^1_1}{p}+\cdots+\log\frac{p^K_1}{p},
\end{equation}
and the extra cumulative loss of $p$
over the correct members of the pairs
when the true label is $0$ is
\begin{equation}\label{eq:0}
  \log\frac{1-p^1_0}{1-p}+\cdots+\log\frac{1-p^K_0}{1-p}.
\end{equation}
Equalizing the two expressions we obtain
$$
  \frac{p^1_1 \cdots p^K_1}{p^K}
  =
  \frac{(1-p^1_0)\cdots(1-p^K_0)}{(1-p)^K},
$$
which gives the required minimax expression for the merged probability
(since \eqref{eq:1} is decreasing and \eqref{eq:0} is increasing in $p$).

\ifCONF
  For the computations in the case of the Brier loss function,
  see \cite{arXiv1511-local}.
\fi

\ifnotCONF
%
In the case of the Brier loss function,
we solve the linear equation
\begin{equation*}  
  (1-p)^2 - (1-p^1_{1})^2 + \cdots + (1-p)^2 - (1-p^K_{1})^2
  =
  p^2 - (p^1_{0})^2 + \cdots + p^2 - (p^K_{0})^2
\end{equation*}
in $p$;
the result is
$$
  p
  =
  \frac1K
  \sum_{k=1}^K
  \left(
    p^k_1 + \frac12 (p^k_0)^2 - \frac12 (p^k_1)^2
  \right).
$$
This expression is more natural than it looks:
see \cite{\OCMVII},
the discussion after~(11);
notice that it reduces to arithmetic mean when $p_0=p_1$.
\fi  

The argument above (``conditioned'' on the proper training set)
is also applicable to IVAP, in which case we need to set $K:=1$;
the probability predictor obtained from an IVAP
by replacing $(p_0,p_1)$ with $p:=p_1/(1-p_0+p_1)$
will be referred to as the \emph{log-minimax IVAP}.
(And CVAP is log-minimax by definition.)

\section{Comparison with other calibration methods}
\label{sec:comparison}

The two alternative calibration methods that we consider in this paper
are Platt's \cite{Platt:2000} and isotonic regression \cite{Zadrozny/Elkan:2001}.

\subsection{Platt's method}

Platt's \cite{Platt:2000} method uses sigmoids\ifnotCONF
  \begin{equation*} 
    g(s)
    :=
    \frac{1}{1+\exp(As+B)},
  \end{equation*}
  where $A<0$ and $B$ are parameters,\fi\
to calibrate the scores.
\ifnotCONF
  Platt discusses two approaches:
  \begin{itemize}
  \item
    run the scoring algorithm and fit the parameters $A$ and $B$ on the full training set,
  \item
    or run the scoring algorithm on a subset (called the proper training set in this paper)
    and fit $A$ and $B$ on the rest (the calibration set).
  \end{itemize}
  Platt recommends the second approach,
  especially that he is interested in SVM,
  and for SVM
  the scores for the training set tend to cluster around $\pm 1$.
  (In fact, this is also true for the calibration scores,
  as discussed below.)

  Platt's recommended method of fitting $A$ and $B$ is
  \begin{equation}\label{eq:min}
    -\sum_{i=1}^k
    \left(
      t_i \log p_i
      +
      (1-t_i) \log (1-p_i)
    \right)
    \to
    \min,
  \end{equation}
  where, in the simplest case, $t_i:=y_i$ are the labels of the calibration observations
  (so that \eqref{eq:min} minimizes the log loss on the calibration set).
  To obtain even better results,
  Platt recommends regularization:
  \begin{equation}\label{eq:t+}
    t_i
    =
    t_+
    :=
    \frac{k_++1}{k_++2}
  \end{equation}
  for the calibration observations labelled 1 (if there are $k_+$ of them) and
  \begin{equation}\label{eq:t-}
    t_i
    =
    t_-
    :=
    \frac{1}{k_-+2}
  \end{equation}
  for the calibration observations labelled 0 (if there are $k_-$ of them).
  We can see from \eqref{eq:t+} and \eqref{eq:t-} that the predictions of Platt's predictor
  are always in the range
\fi
\ifCONF
  Platt uses a regularization procedure ensuring that the predictions of his method are always in the range
\fi
\begin{equation}\label{eq:range}
  \left(
    \frac{1}{k_-+2},
    \frac{k_++1}{k_++2}
  \right).  
\end{equation}
\ifCONF
  where $k_-$ is the number of calibration observations labelled 0
  and $k_+$ is the number of calibration observations labelled 1.
  It is interesting that the predictions output by the log-minimax IVAP
  are in the same range (except that the end-points are now allowed):
  see \cite{arXiv1511-local}.
\fi
\ifnotCONF
  Let us check that the predictions output by the log-minimax IVAP
  are in the same range as those for Platt's method (except that the end-points are now allowed):
  \begin{lemma}\label{lem:IVAP}
    In the case of IVAP,
    $p_1\ge1/(k_-+1)$ and $p_0\le1-1/(k_++1)$,
    where $k_-$ and $k_+$ are the numbers of positive and negative observations in the calibration set,
    respectively.
    In the case of log-minimax IVAP,
    $p\in[1/(k_-+2),1-1/(k_++2)]$
    (i.e., $p$ is in the closure of \eqref{eq:range}).
    In the case of CVAP,
    $p\in[1/(k+2),1-1/(k+2)]$,
    where $k$ is the size of the largest fold.
  \end{lemma}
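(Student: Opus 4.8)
The plan is to reduce all three cases to the two IVAP bounds $p_1 \ge 1/(k_-+1)$ and $p_0 \le 1 - 1/(k_++1)$, and then to read off the log-minimax IVAP and CVAP bounds from the merging formulas by elementary algebra. Throughout, $k_+$ denotes the number of calibration observations labelled $1$ and $k_-$ the number labelled $0$.

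For the IVAP bounds I would use the level-set (averaging) property of isotonic regression already invoked in the proof of Proposition~\ref{prop:validity}: $p_1 = f_1(s)$ is the average label over the block of the GCM containing the test observation $(s,1)$. That block contains $(s,1)$, contributing label $1$ of weight $1$, together with some calibration observations, at most $k_-$ of which are labelled $0$. Writing $Y \ge 1$ for the total label-$1$ weight and $N \le k_-$ for the total label-$0$ weight in the block, I get
$$
  p_1 = \frac{Y}{Y+N} \ge \frac{1}{1+k_-},
$$
since $Y/(Y+N)$ increases in $Y$ and decreases in $N$. Symmetrically, $p_0$ is the average over the block containing $(s,0)$, whose label-$1$ weight is at most $k_+$ and whose label-$0$ weight is at least $1$, so $p_0 \le k_+/(k_++1) = 1 - 1/(k_++1)$.

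For the log-minimax IVAP, I would write $p = p_1/((1-p_0)+p_1) = 1/(1+r)$ with $r := (1-p_0)/p_1$. The IVAP bounds give $r \le k_-+1$ (from $1-p_0 \le 1$ and $p_1 \ge 1/(k_-+1)$) and $r \ge 1/(k_++1)$ (from $1-p_0 \ge 1/(k_++1)$ and $p_1 \le 1$); substituting into $p = 1/(1+r)$ and $1-p = r/(1+r)$ gives $p \ge 1/(k_-+2)$ and $1-p \ge 1/(k_++2)$. For the CVAP, the key observation is that the geometric-mean merge collapses to the same scalar form, because the geometric mean of ratios is the ratio of geometric means:
$$
  p = \frac{\GM(p_1)}{\GM(1-p_0)+\GM(p_1)} = \frac{1}{1+\GM(r)},
  \qquad r_j := \frac{1-p_0^j}{p_1^j}.
$$
Applying the per-fold IVAP bounds and using that each fold has at most $k$ observations of either label gives $r_j \in [1/(k+1),\, k+1]$ for every $j$; since the geometric mean of numbers in an interval lies in that interval, $\GM(r) \in [1/(k+1),\, k+1]$, and the same substitution as in the log-minimax case yields $p \in [1/(k+2),\, 1-1/(k+2)]$.

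I expect the only non-routine point to be the CVAP reduction: recognizing that $\GM(1-p_0)/\GM(p_1) = \GM((1-p_0)/p_1)$ turns the geometric-mean merge into $1/(1+\GM(r))$, after which the per-fold bounds propagate through the geometric mean automatically. The isotonic-regression bounds themselves are immediate once one notes that the relevant GCM block is forced to contain the test point together with its postulated label.
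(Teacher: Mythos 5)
Your proposal is correct and follows essentially the same route as the paper's proof: reduce the log-minimax IVAP and CVAP bounds to the basic IVAP inequalities and finish by elementary monotonicity (your reparametrization via $r=(1-p_0)/p_1$ and the multiplicativity of $\GM$ is a repackaging of the paper's direct use of the isotonicity of $t/(c+t)$ in $t$ together with $1-p_0\le1$, $\GM(1-p_0)\le 1$, and $\GM(p_1)\ge 1/(k+1)$). The only substantive addition is that you actually prove the base IVAP bounds via the level-set averaging property of isotonic regression, a step the paper dismisses as obvious.
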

  \begin{proof}
    The statement about IVAP is obvious,
    and we will only check that it implies the two other statements.
    For concreteness, we will consider the lower bounds.
    The lower bound $1/(k_-+2)$ for log-minimax IVAP can be deduced from $p_1\ge1/(k_-+1)$
    using the isotonicity of $t/(c+t)$ in $t>0$ for $c>0$:
    \begin{equation*}
      \frac{p_1}{(1-p_0)+p_1}
      \ge
      \frac{1/(k_-+1)}{(1-p_0)+1/(k_-+1)}  
      \ge
      \frac{1/(k_-+1)}{1+1/(k_-+1)}
      =
      \frac{1}{k_-+2}.
    \end{equation*}
    In the same way the lower bound $1/(k+2)$ for CVAP follows from $\GM(p_1)\ge1/(k+1)$:
    \begin{equation*}  
      \frac{\GM(p_1)}{\GM(1-p_0)+\GM(p_1)}
      \ge
      \frac{1/(k+1)}{\GM(1-p_0)+1/(k+1)} 
      \ge
      \frac{1/(k+1)}{1+1/(k+1)}
      =
      \frac{1}{k+2}.
      \tag*{$\qed$}
    \end{equation*}
    \renewcommand{\qedsymbol}{}
  \end{proof}

  It is clear that the end-points of the interval~\eqref{eq:range}
  can be approached arbitrarily closely in the case of Platt's predictor
  and attained in the case of IVAPs.

  The main disadvantage of Platt's method is that the optimal calibration curve $g$
  is quite often far from being a sigmoid;
  and if the training set is very big,
  we will suffer, since in this case we can learn the best shape of the calibrator $g$.
  This is particularly serious in asymptotics as the amount of data tends to infinity.


  Zhang \cite{Zhang:2004} (Section~3.3) observes that in the case of SVM
  and universal \cite{Steinwart:2001} kernels the scores tend to cluster around $\pm1$
  at ``non-trivial'' objects, i.e., objects that are labelled 1
  with non-trivial (not close to 0 or 1) probability.
  This means that any sigmoid will be a poor calibrator unless the prediction problem is very easy.
  Formally, we have the following statement
  (a trivial corollary of known results),
  which uses the notation $\eta(x)$ for the conditional probability
  that the label of an object $x\in\mathbf{X}$ is~1
  and assumes that the labels take values in $\{-1,1\}$, $y_i\in\{-1,1\}$
  (rather than $y_i\in\{0,1\}$, as in the rest of this paper).
  \begin{proposition}
    Suppose that the probability of each of the events
    $\eta(X)=0$, $\eta(X)=1/2$, and $\eta(X)=1$ is 0.
    Let $f_m$ be the SVM for a training set of size $m$,
    i.e., the solution to the optimization problem
    \begin{equation}\label{eq:SVM}
      C_m \left\|f\right\|^2_H
      +
      \sum_{i=1}^m
      \phi(f(x_i)y_i)
      \to
      \min,
    \end{equation}
    where $\phi(v):=(1-v)^+$ and $H$ is a universal RKHS (\cite{Steinwart/Christmann:2008}, Definition~4.52).
    \ifFULL\bluebegin
      $H$ is separable automatically: see Lemma 4.33 in \cite{Steinwart/Christmann:2008}.
      $H$ consists of bounded functions automatically (since the elements of $H$ are continuous).
      The assumptions in \cite{Steinwart/Christmann:2008} (Theorem~8.1) are weaker:
      e.g., it is enough to assume that $H$ is dense in $L_1(Q_{\mathbf{X}})$,
      $Q$ being the data-generating probability measure on $\mathbf{X}\times\{0,1\}$
      and $Q_{\mathbf{X}})$ being its marginal on $\mathbf{X}$.
    \blueend\fi
    As $m\to\infty$,
    \begin{equation*} 
      f_m(X)
      \to
      f(X)
      :=
      \begin{cases}
        -1 & \text{if $\eta(X)\in[0,1/2]$}\\
        1 & \text{if $\eta(X)\in(1/2,1]$}
      \end{cases}
    \end{equation*}
    in probability
    provided $C_m\to\infty$ and $C_m=o(m)$.
  \end{proposition}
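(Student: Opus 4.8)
The plan is to read the statement off the known universal consistency of support vector machines, supplemented by an elementary \emph{self-calibration} computation for the hinge loss $\phi$. Write $\mathcal{R}(g):=\Expect\phi(g(X)Y)$ for the hinge risk of a function $g$ and $\mathcal{R}^*:=\inf_g\mathcal{R}(g)$ for the Bayes hinge risk, the infimum taken over all measurable $g$. Dividing the objective in \eqref{eq:SVM} by $m$ rewrites the optimization problem as a standard SVM with regularization parameter $\lambda_m:=C_m/m$, and the hypotheses $C_m\to\infty$ and $C_m=o(m)$ become precisely $m\lambda_m\to\infty$ and $\lambda_m\to0$. Because $H$ is universal, its elements approximate continuous functions uniformly, so the hinge approximation error of $H$ vanishes; the standard SVM consistency theorem (\cite{Steinwart/Christmann:2008}, Theorem~8.1) then yields $\mathcal{R}(f_m)\to\mathcal{R}^*$ in probability. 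This first step is entirely off the shelf.

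Next I would carry out the self-calibration estimate. For a fixed object $x$ put $\eta:=\eta(x)$ and consider the conditional hinge risk as a function of a real score $u$,
\[
  C_\eta(u):=\eta(1-u)^++(1-\eta)(1+u)^+ .
\]
This is convex and piecewise linear, and a case analysis over $u\le-1$, $-1\le u\le1$, and $u\ge1$ shows that it is minimized exactly at $u=f(x)$ (i.e.\ at $+1$ when $\eta>1/2$ and at $-1$ when $\eta<1/2$) and that its excess satisfies
\[
  C_\eta(u)-C_\eta(f(x))\ge c(\eta)\,\lvert u-f(x)\rvert,
  \qquad
  c(\eta):=\min\bigl(\lvert2\eta-1\rvert,\eta,1-\eta\bigr),
\]
for every $u\in\bbbr$. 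Integrating over $X$, and noting that the pointwise minimizer $f$ makes $\Expect C_{\eta(X)}(f(X))=\mathcal{R}^*$, gives
\[
  \Expect\bigl[c(\eta(X))\,\lvert f_m(X)-f(X)\rvert\bigr]
  \le
  \mathcal{R}(f_m)-\mathcal{R}^*
  \longrightarrow0
\]
in probability.

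Finally I would convert this into the claimed convergence of $f_m(X)$ to $f(X)$. Here the three hypotheses $\Prob(\eta(X)=0)=\Prob(\eta(X)=1/2)=\Prob(\eta(X)=1)=0$ are used: together they say $c(\eta(X))>0$ almost surely, whence $\Prob(c(\eta(X))\ge\rho)\to1$ as $\rho\downarrow0$. Fix $\epsilon,\rho>0$ and let $D_m$ denote the (training-sample--dependent) expectation displayed above. On the event $c(\eta(X))\ge\rho$ the inequality $\lvert f_m(X)-f(X)\rvert\ge\epsilon$ forces $c(\eta(X))\lvert f_m(X)-f(X)\rvert\ge\rho\epsilon$, so conditioning on the sample and applying Markov's inequality bounds the conditional probability of $\{\lvert f_m(X)-f(X)\rvert\ge\epsilon\}\cap\{c(\eta(X))\ge\rho\}$ by $D_m/(\rho\epsilon)$. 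Since $D_m\to0$ in probability, bounded convergence makes the corresponding unconditional probability tend to $0$; adding the term $\Prob(c(\eta(X))<\rho)$ and letting $\rho\downarrow0$ yields $\Prob(\lvert f_m(X)-f(X)\rvert\ge\epsilon)\to0$, as required.

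The main obstacle is not a hard estimate but getting the bookkeeping exactly right: the single multiplier $c(\eta)$ must simultaneously dominate the excess risk in the interior regime $u\in[-1,1]$, where it behaves like $\lvert2\eta-1\rvert$, and in the overshoot regime $\lvert u\rvert>1$, where it behaves like $\min(\eta,1-\eta)$; and one must check that the three excluded values $\eta\in\{0,1/2,1\}$ are exactly the zeros of $c$, so that each hypothesis earns its place. Everything else reduces to the cited consistency theorem and a routine Markov/Fubini argument.
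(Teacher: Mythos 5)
Your argument is correct and follows essentially the same route as the paper, which simply cites Theorem~4.4 of Zhang (2004) and the proof of Theorem~8.1 of Steinwart and Christmann (2008): reduce to hinge-risk consistency of the SVM under $\lambda_m=C_m/m\to0$, $m\lambda_m\to\infty$, and then exploit the structure of the conditional hinge risk to pass from excess risk to convergence of the scores themselves. The only difference is that you work out explicitly the self-calibration bound $C_\eta(u)-C_\eta(f(x))\ge c(\eta)\lvert u-f(x)\rvert$ and the ensuing Markov argument, which the paper outsources to the cited references; your computation is sound (the three measure-zero hypotheses are indeed exactly the zeros of $c$), so this is a more self-contained rendering of the same proof.
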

  \begin{proof}
    This follows immediately from Theorem~4.4 in \cite{Zhang:2004}
    for a natural class of universal kernels related to neural networks.
    In general, see the proof of Theorem~8.1 in \cite{Steinwart/Christmann:2008}.
  \end{proof}

  The intuition behind the SVM decision values clustering around $\pm1$ is very simple.
  SVM solves the optimization problem \eqref{eq:SVM};
  asymptotically as $m\to\infty$ and under natural assumptions
  (such as $C_m\to\infty$ and $C_m=o(m)$),
  this solves
  $$
    \Expect
    \phi(f(X)Y)
    \to
    \min.
  $$
  We can optimize separately for different values of $\eta(x)$.
  Given $\eta(x)=\eta^*$, we have the optimization problem
  $$
    \eta^*\phi(f) + (1-\eta^*)\phi(-f) \to \min,
  $$
  whose solutions are
  \begin{equation*} 
    f(x)
    \in
    \begin{cases}
      (-\infty,-1] & \text{if $\eta(x)=0$}\\
      \{-1\} & \text{if $\eta(x)\in(0,1/2)$}\\
      [-1,1] & \text{if $\eta(x)=1/2$}\\
      \{1\} & \text{if $\eta(x)\in(1/2,1)$}\\
      [1,\infty) & \text{if $\eta(x)=1$}.
    \end{cases}
  \end{equation*}

  \ifFULL\bluebegin
    As a function of $f$,
    $$
      \eta^*\phi(f) + (1-\eta^*)\phi(-f)
    $$
    is a continuous function which is equal to $\eta^*(1-v)$ for $v\in(-\infty,-1]$,
    equal to $(1-\eta^*)(1+v)$ for $v\in[1,\infty)$, and linear for $v\in[-1,1]$
    (and these conditions determine the function).
  \blueend\fi

  Assuming that the probability of each of the events
  $\eta(X)=0$, $\eta(X)=1/2$, and $\eta(X)=1$ is 0,
  it is easy to check that asymptotically the best achievable excess log loss of a sigmoid
  over the Bayes algorithm is
  \begin{equation}  
  \label{eq:excess}
    \Expect
    \Bigl(
      \KL
      \left(
        \eta
        \:\middle||\:
        \Expect(\eta\mid\eta>1/2)
      \right)
      \III_{\eta>1/2} 
      +
      \KL
      \left(
        \eta
        \:\middle||\:
        \Expect(\eta\mid\eta<1/2)
      \right)
      \III_{\eta<1/2}
    \Bigr),
  \end{equation}
  where $\KL$ is Kullback--Leibler divergence defined in terms of base 2 logarithm $\log_2$,
  and the conditional expectation $\Expect(\eta\mid E)$
  is defined to be $\Expect(\eta\III_E)/\Prob(E)$.

  \ifFULL\bluebegin
    Indeed, the optimal prediction for $\eta>1/2$ is $\Expect(\eta\mid\eta>1/2)$
    and the optimal prediction for $\eta<1/2$ is $\Expect(\eta\mid\eta<1/2)$.
    Let us check, e.g., the first statement.
    We are to minimize over $p\in(0,1)$ the integral of
    $$
      -\eta\log p + (1-\eta)\log(1-p)
    $$
    over the set $\eta>1/2$.
    Set $C:=\int_{\eta>1/2}\eta\dd P$ and $D:=P(\eta>1/2)$.
    So we are to maximize
    $$
      C\log p + (D-C)\log(1-p).
    $$
    Differentiating and solving the equation
    $$
      \frac{C}{p} - \frac{D-C}{1-p} = 0
    $$
    we obtain
    $$
      p = C/D = \Expect(\eta\mid\eta>1/2).
    $$
  \blueend\fi

  On the other hand, there are no apparent obstacles
  to it approaching 0 in the case of isotonic regression,
  considered in the next subsection.

  For illustration, suppose $\eta:=\eta(X)$ is distributed uniformly in $[0,1]$.
  It is easy to see that
  \begin{align*}
    \Expect(\eta\mid\eta>1/2) &= 3/4\\
    \Expect(\eta\mid\eta<1/2) &= 1/4;
  \end{align*}
  therefore, the excess loss \eqref{eq:excess} is
  \begin{multline*}
    \Expect
    \Bigl(
      \KL
      \left(
        \eta
        \:\middle||\:
        3/4
      \right)
      \III_{\eta>1/2}
      +
      \KL
      \left(
        \eta
        \:\middle||\:
        1/4
      \right)
      \III_{\eta<1/2}
    \Bigr) 
    =
    \Expect
    \Bigl(
      \eta\log_2\eta
      +
      (1-\eta)\log_2(1-\eta)
    \Bigr)\\   
    +
    2
    \Expect
    \left(
      \eta 1_{\eta>1/2} \log_2\frac43
      +
      \eta 1_{\eta<1/2} \log_2 4
    \right) 
    \approx
    -0.7213 + 0.8113
    =
    0.09.
  \end{multline*}
  We can see that the Bayes log loss is $72.13\%$,
  whereas the best loss achievable by a sigmoid is $81.13\%$,
  9 percentage points worse.
\fi  

\subsection{Isotonic regression}

There are two standard uses of isotonic regression:
we can train the scoring algorithm using what we call a proper training set,
and then use the scores of the observations in a disjoint calibration (also called validation) set
for calibrating the scores of test objects (as in \cite{Caruana/NM:2006});
alternatively, we can train the scoring algorithm on the full training set
and also use the full training set for calibration
(it appears that this was done in \cite{Zadrozny/Elkan:2001}).
\ifFULL\bluebegin
  Alternatively, we could use a cross-validation scheme similar to CVAPs.
\blueend\fi
In both cases, however, we can expect to get an infinite log loss when the test set becomes large enough\ifCONF\ (see \cite{arXiv1511-local})\fi.
\ifnotCONF
  Indeed, suppose that we have fixed proper training and calibration sets
  (not necessarily disjoint, so that both cases mentioned above are covered)
  such that the score $s(X)$ of a random object $X$
  is below the smallest score of the calibration objects with a positive probability;
  suppose also that the distribution of the label of a random observation
  is concentrated at 0 with probability zero.
  Under these realistic assumptions
  the probability that the average log loss on the test set is $\infty$
  can be made arbitrarily close to one
  by making the size of the test set large enough:
  indeed, with a high probability there will be an observation $(x,y)$ in the test set
  such that the score $s(x)$ is below the smallest score of the calibration objects but $y=1$;
  the log loss on such an observation will be infinite.
\fi

The presence of regularization is an advantage of Platt's method:
e.g., it never suffers an infinite loss when using the log loss function.
There is no standard method of regularization for isotonic regression,
and we do not apply one\ifnotCONF\footnote{One of the reviewers
  of the conference version of this paper
  proposed complementing the calibration set used in isotonic regression
  by two dummy observations:
  one with score $+\infty$ and labelled by $0$
  and the other with score $-\infty$ and labelled by $1$.}\fi.

\section{Empirical studies}
\label{sec:experiments}

The main loss function (cf., e.g., \cite{Vovk:2015Yuri})
that we use in our empirical studies is the \emph{log loss}
\begin{equation}\label{eq:log-loss}
  \lambda_{\log}(p,y)
  :=
  \begin{cases}
    -\log p & \text{if $y=1$}\\
    -\log(1-p) & \text{if $y=0$},
  \end{cases}
\end{equation}
where $\log$ is binary logarithm,
$p\in[0,1]$ is a probability prediction, and $y\in\{0,1\}$ is the true label.
Another popular loss function is the \emph{Brier loss}
\begin{equation}\label{eq:Brier-loss}
  \lambda_{\Br}(p,y)
  :=
  4(y-p)^2.
\end{equation}
We choose the coefficient 4 in front of $(y-p)^2$ in \eqref{eq:Brier-loss}
and the base 2 of the logarithm in~\eqref{eq:log-loss} in order for the minimax no-information predictor
that always predicts $p:=1/2$ to suffer loss~1.
An advantage of the Brier loss function
is that it still makes it possible to compare the quality of prediction
in cases when prediction algorithms (such as isotonic regression) give a categorical but wrong prediction
(and so are simply regarded as infinitely bad when using log loss).

\ifFULL\bluebegin
  In the multi-class case
  we assume that the label space $\mathbf{Y}$ is finite
  and consider probability predictions that are probability measures on $\mathbf{Y}$;
  for example, a prediction $p\in[0,1]$ output by a CVAP
  is re-interpreted as the probability measure $P$ on $\{0,1\}$
  such that $P(\{1\})=p$.
  The main loss function that we use is the \emph{log loss}
  \begin{equation}\label{eq:multi-log-loss}
    \lambda_{\log}(P,y)
    :=
    -\log_{\left|\mathbf{Y}\right|} P(\{y\}),
  \end{equation}
  where we take the size $\left|\mathbf{Y}\right|$ of the label space
  as the base of the logarithm.
  Another popular loss function is the \emph{Brier loss}
  $$
    \lambda_{\Br}(P,y)
    :=
    \frac{\left|\mathbf{Y}\right|}{\left|\mathbf{Y}\right|-1}
    \sum_{y'\in\mathbf{Y}}
    \left(
      1_{y'=y} - P(\{y'\})
    \right)^2,
  $$
  where the coefficient in front of the sum is chosen in such a way that the minimax no-information predictor
  that always predicts $1/\left|\mathbf{Y}\right|$ suffers loss 1
  (this is also the reason for our choice of the base of the logarithm in~(\ref{eq:multi-log-loss})).
\blueend\fi

The loss of a probability predictor on a test set will be measured
by the arithmetic average of the losses it suffers on the test set,
namely, by the \emph{mean log loss} (MLL) and the \emph{mean Brier loss} (MBL)
\begin{equation}\label{eq:losses}  
  \MLL
  :=
  \frac1n
  \sum_{i=1}^n
  \lambda_{\log}(p_i,y_i), 
  \quad
  \MBL
  :=
  \frac1n
  \sum_{i=1}^n
  \lambda_{\Br}(p_i,y_i), 
\end{equation}
where $y_i$ are the test labels and $p_i$ are the probability predictions for them.
We will not be checking directly whether various calibration methods
produce well-calibrated predictions,
since it is well known that lack of calibration increases the loss
as measured by loss functions such as log loss and Brier loss
(see, e.g., \cite{Murphy:1973} for the most standard decomposition
of the latter into the sum of the calibration error and refinement error).

\ifFULL\bluebegin
  In the case of the Brier loss,
  we might also have taken the square root:
  the behaviour of the log loss entropy
  (i.e., Shannon entropy) is intermediate between the Brier entropy (i.e., Gini index)
  and the square root of the Brier entropy:
  see Figure~\ref{fig:Hastie}
  (analogous to a standard figure from \cite{Hastie/etal:2009}).
  Advantages of using the RMBL as our main measure
  are that we used it in \cite{\OCMVII},
  and that many prediction algorithm suffer disproportionately large log losses
  (as compared to their expected values).
  On the other hand, Brier loss is a proper loss function
  whereas its square root is not.
  \begin{figure}[tb]
    \begin{center}
      \includegraphics[width=0.7\columnwidth]{entropies.pdf}
    \end{center}
    \caption{Gini index (solid blue), the square root of Gini index (dotted blue),
      and Shannon entropy (solid red)}
    \label{fig:Hastie}
  \end{figure}
\blueend\fi

In this section we compare log-minimax IVAPs
(i.e., IVAPs whose outputs are replaced by probability predictions,
as explained in Section~\ref{sec:merging})
and CVAPs with Platt's method \cite{Platt:2000}
and the standard method \cite{Zadrozny/Elkan:2001} based on isotonic regression;
the latter two will be referred to as ``Platt'' and ``Isotonic''
in our tables and figures.
(Even though for both IVAPs and CVAPs we use the log-minimax procedure
for merging multiprobability predictions,
the Brier-minimax procedure leads to virtually identical empirical results.)
We use the same underlying algorithms as in \cite{\OCMVII},
namely J48 decision trees (abbreviated to ``J48''),
J48 decision trees with bagging (``J48 bagging''),
logistic regression (sometimes abbreviated to ``logistic''),
naive Bayes, neural networks, and support vector machines (SVM),
as implemented in Weka \cite{Weka:2011} (University of Waikato, New Zealand).
The underlying algorithms (except for SVM) produce scores in the interval $[0,1]$,
which can be used directly as probability predictions
(referred to as ``Underlying'' in our tables and figures)
or can be calibrated using the methods of \cite{Platt:2000,Zadrozny/Elkan:2001}
or the methods proposed in this paper (``IVAP'' or ``CVAP'' in the tables and figures).

We start our empirical studies with
the \texttt{adult} data set
available from the UCI repository \cite{UCI:data}
(this is the main data set used in \cite{Platt:2000}
and one of the data sets used in \cite{Zadrozny/Elkan:2001});
however,
as we will see later,
the picture that we observe is typical for other data sets as well.
We use the original split of the data set into a training set of $N_{\rm train}=32,561$ observations
and a test set of $N_{\rm test}=16,281$ observations.
The results of applying the four calibration methods
(plus the vacuous one, corresponding to just using the underlying algorithm)
to the six underlying algorithms
for this data set are shown in
\ifLandscape Figure~\ref{fig:adult}\fi\ifPortrait Figures~\ref{fig:log-adult} and~\ref{fig:Brier-adult}\fi.
\ifLandscape
  The six top plots report results for the log loss
  (namely, $\MLL$, as defined in \eqref{eq:losses})
  and the six bottom plots for the Brier loss
  (namely, $\MBL$).
\fi\ifPortrait
  Figures~\ref{fig:log-adult} reports results for the log loss
  (namely, $\MLL$, as defined in \eqref{eq:losses})
  and Figure~\ref{fig:Brier-adult} for the Brier loss
  (namely, $\MBL$).
\fi%
The underlying algorithms are given in the titles of the plots
and the calibration methods are represented by different line styles,
as explained in the legends.
The marks on the horizontal axis are the ratios of the size of the proper training set to the size of the calibration set
(except for the label \texttt{all}, which will be explained later);
in the case of CVAPs, the number $K$ of folds can be expressed as the sum of the two numbers forming the ratio
(therefore, column~4:1 corresponds to the standard choice of 5 folds in the method of cross-validation).
Missing curves or points on curves mean that the corresponding values either are too big
and would squeeze unacceptably the interesting parts of the plot if shown
or are infinite (such as many results for isotonic regression and neural networks under log loss).
In the case of CVAPs, the training set is split into $K$ equal
(or as close to being equal as possible) contiguous folds:
the first $\lceil N_{\rm train}/K\rceil$ training observations are included in the first fold,
the next $\lceil N_{\rm train}/K\rceil$ (or $\lfloor N_{\rm train}/K\rfloor$)
in the second fold, etc.\
(first $\lceil\cdot\rceil$ and then $\lfloor\cdot\rfloor$ is used
unless $N_{\rm train}$ is divisible by $K$).
In the case of the other calibration methods,
we used the first $\lceil\frac{K-1}{K}N_{\rm train}\rceil$ training observation
as the proper training set (used for training the scoring algorithm)
and the rest of the training observations are used as the calibration set.

\ifLandscape
  \begin{figure*} 
    \hspace*{-23mm} 
      \includegraphics[trim = 0mm 10mm 0mm 3mm, clip, width=1.4\textwidth]{logAdultLandscape.pdf}
    \hspace*{-23mm} %
      \includegraphics[trim = 0mm 10mm 0mm 3mm, clip, width=1.4\textwidth]{brierAdultLandscape.pdf}
    \caption{The log and Brier losses of the four calibration methods
      applied to the six prediction algorithms on the \texttt{adult} data set.}
    \label{fig:adult}
  \end{figure*}
\fi

\ifPortrait
  \begin{figure*} 
    \begin{center}
      \includegraphics[trim = 0mm 10mm 0mm 12mm, clip, width=\textwidth]{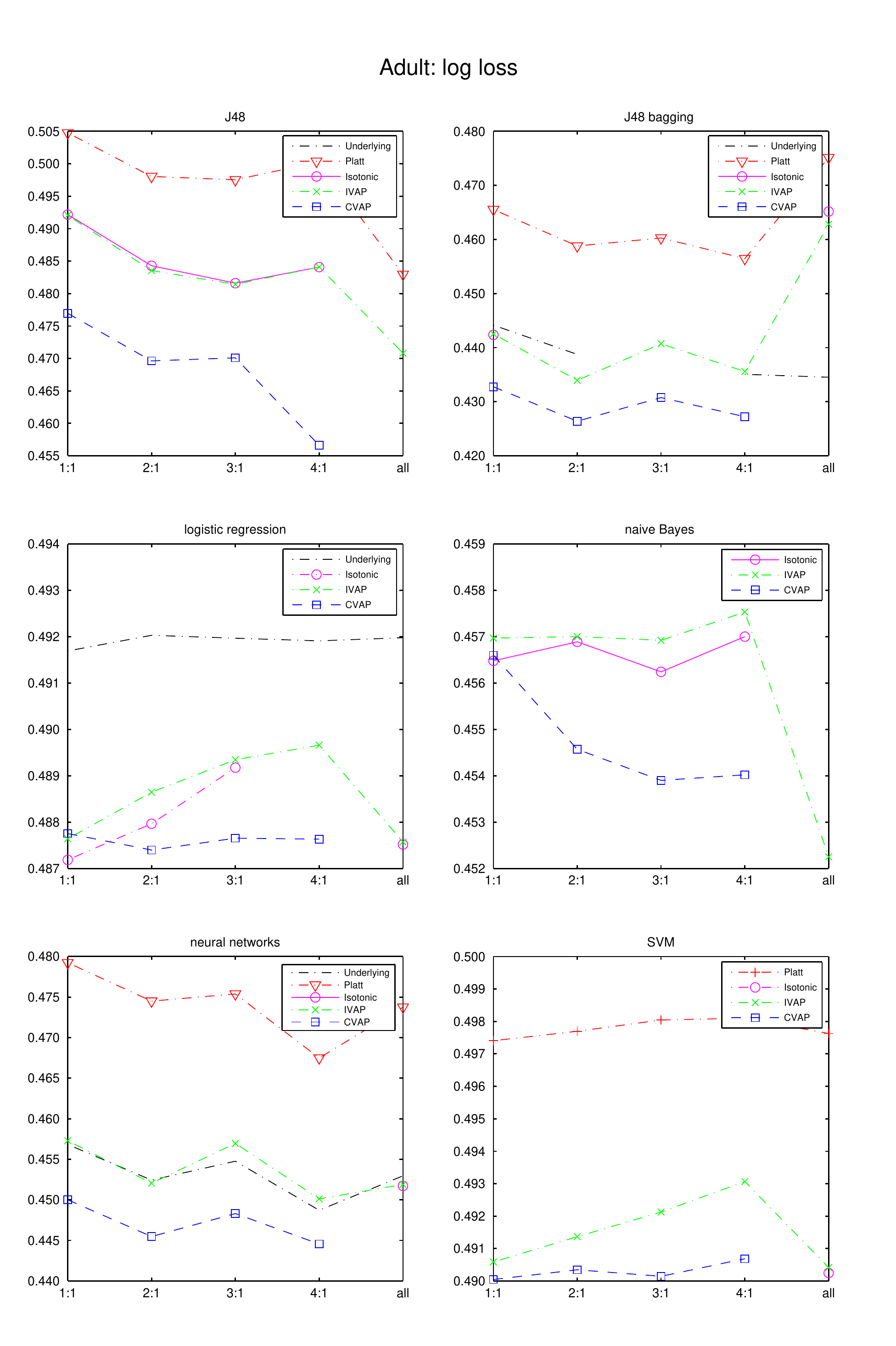}
    \end{center}
    \caption{The log losses of the four calibration methods
      applied to the six prediction algorithms on the \texttt{adult} data set.}
    \label{fig:log-adult}
  \end{figure*}

  \begin{figure*} 
    \begin{center}
      \includegraphics[trim = 0mm 10mm 0mm 12mm, clip, width=\textwidth]{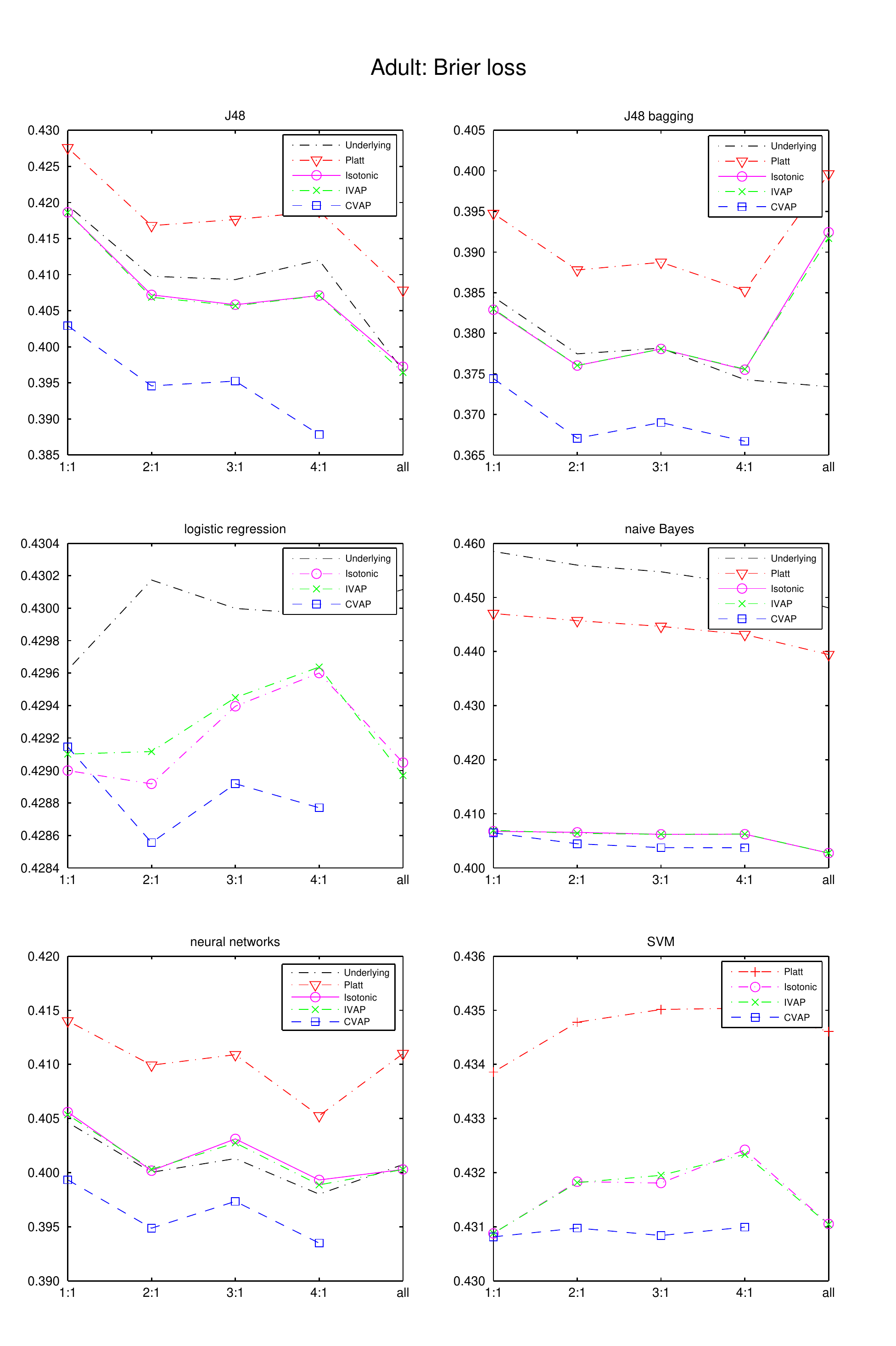}
    \end{center}
    \caption{The analogue of Figure~\ref{fig:log-adult} for Brier loss.}
    \label{fig:Brier-adult}
  \end{figure*}
\fi

\ifFULL\bluebegin
  It seems that our experimental results illustrate a new phenomenon:
  the ``all'' mode often produces best results for isotonic regression
  (unlike what \cite{Caruana/NM:2006} seem to say [a later remark: I could not find the precise place in \cite{Caruana/NM:2006}]).
\blueend\fi

In the case of log loss, isotonic regression often suffers infinite losses,
which is indicated by the absence of the round marker for isotonic regression;
e.g., only one of the log losses for SVM is finite.
We are not trying to use ad hoc solutions, such as clipping predictions
to the interval $[\epsilon,1-\epsilon]$ for a small $\epsilon>0$,
since we are also using the bounded Brier loss function.
The CVAP lines tend to be at the bottom in all plots;
experiments with other data sets also confirm this.

The column \texttt{all} in the plots of
\ifLandscape Figure~\ref{fig:adult}\fi\ifPortrait Figures~\ref{fig:log-adult} and~\ref{fig:Brier-adult}\fi\
refers to using the full training set as both the proper training set and calibration set.
(In our official definition of IVAP we require that the last two sets be disjoint,
but in this section we continue to refer to IVAPs modified in this way simply as IVAPs;
in \cite{\OCMVII}, such prediction algorithms were referred to as SVAPs,
simplified Venn--Abers predictors.)
Using the full training set as both the proper training set and calibration set
might appear naive
(and is never used in the extensive empirical study \cite{Caruana/NM:2006}),
but it often leads to good empirical results on larger data sets.
However, it can also lead to very poor results,
as in the case of ``J48 bagging'' (for IVAP, Platt, and Isotonic),
the underlying algorithm that achieves the best performance in
\ifLandscape Figure~\ref{fig:adult}\fi\ifPortrait Figures~\ref{fig:log-adult} and~\ref{fig:Brier-adult}\fi.

A natural question is whether CVAPs perform better than the alternative calibration methods in
\ifLandscape Figure~\ref{fig:adult}\fi\ifPortrait Figures~\ref{fig:log-adult} and~\ref{fig:Brier-adult}\fi\
(and our other experiments) because of applying cross-over (in moving from IVAP to CVAP)
or because of the extra regularization used in IVAPs.
The first reason is undoubtedly important for both loss functions
and the second for the log loss function.
The second reason plays a smaller role for Brier loss for relatively large data sets
(in \ifLandscape the lower half of Figure~\ref{fig:adult}\fi\ifPortrait Figure~\ref{fig:Brier-adult}\fi\
the curves for \texttt{Isotonic} and \texttt{IVAP} are very close to each other),
but IVAPs are consistently better for smaller data sets even when using Brier loss.
In Tables~\ref{tab:log} and~\ref{tab:Brier}
we apply the four calibration methods and six underlying algorithms
to a much smaller training set,
namely to the first $5,000$ observations of the \texttt{adult} data set as the new training set,
following \cite{Caruana/NM:2006};
the first $4,000$ training observations are used as the proper training set,
the following $1,000$ training observations as the calibration set,
and all other observations (the remaining training and all test observations)
are used as the new test set.
The results are shown in Tables~\ref{tab:log} for log loss
and~\ref{tab:Brier} for Brier loss.
They are consistently better for IVAP than for IR (isotonic regression).
Results for nine very small data sets are given in Tables~1 and~2 of \cite{\OCMVII},
where the results for IVAP
(with the full training set used as both proper training and calibration sets,
labelled ``SVA'' in the tables in \cite{\OCMVII})
are consistently (in 52 cases out of the 54 using Brier loss) better,
usually significantly better, than for isotonic regression
(referred to as DIR in the tables in \cite{\OCMVII}).

\begin{table}
\caption{The log loss for the four calibration methods and six underlying algorithms
  for a small subset of the \texttt{adult} data set}\label{tab:log}
  \begin{center}
    \begin{tabular}{c|ccccc}
      algorithm & Platt & IR & IVAP & CVAP \\
      \hline
      J48 & 0.5226 & $\infty$ & 0.5117 & 0.5102 \\
      J48 bagging & 0.4949 & $\infty$ & 0.4733 & 0.4602 \\
      logistic & 0.5111 & $\infty$ & 0.4981 & 0.4948 \\
      naive Bayes & 0.5534 & $\infty$ & 0.4839 & 0.4747 \\
      neural networks & 0.5175 & $\infty$ & 0.5023 & 0.4805 \\
      SVM & 0.5221 & $\infty$ & 0.5015 & 0.4997
    \end{tabular}
  \end{center}
\end{table}

\begin{table}
\caption{The analogue of Table~\ref{tab:log} for the Brier loss}\label{tab:Brier}
  \begin{center}
    \begin{tabular}{c|ccccc}
      algorithm & Platt & IR & IVAP & CVAP \\
      \hline
      J48 & 0.4463 & 0.4378 & 0.4370 & 0.4368 \\
      J48 bagging & 0.4225 & 0.4153 & 0.4123 & 0.3990 \\
      logistic & 0.4470 & 0.4417 & 0.4377 & 0.4342 \\
      naive Bayes & 0.4670 & 0.4329 & 0.4311 & 0.4227 \\
      neural networks & 0.4525 & 0.4574 & 0.4440 & 0.4234 \\
      SVM & 0.4550 & 0.4450 & 0.4408 & 0.4375
    \end{tabular}
  \end{center}
\end{table}

The following information might help the reader in reproducing our results
(in addition to our code being posted on arXiv together with this paper).
For each of the standard prediction algorithms within Weka that we use,
we optimise the parameters by minimising the Brier loss on the calibration set,
apart from the column labelled \texttt{all}.
(We cannot use the log loss since it is often infinite in the case of isotonic regression.)
We then use the trained algorithm to generate the scores for the calibration and test sets,
which allows us to compute probability predictions
using Platt's method, isotonic regression, IVAP, and CVAP.
All the scores apart from SVM are already in the $[0,1]$ range and can be used as probability predictions.
\ifFULL\bluebegin
  In the case of SVM,
  we use the Weka sequential minimal optimization algorithm with the option ``build logistic models'',
  which calibrates the SVM scores into probabilities.
  We then apply the other calibration methods on top of those probability predictions,
  which is equivalent to applying them to the original SVM scores.
\blueend\fi%
Most of the parameters are set to their default values,
and the only parameters that are optimised are \texttt{C} (pruning confidence) for J48 and J48 bagging,
\texttt{R} (ridge) for logistic regression,
\texttt{L} (learning rate) and \texttt{M} (momentum) for neural networks (\texttt{MultilayerPerceptron}),
and \texttt{C} (complexity constant) for SVM (\texttt{SMO}, with the linear kernel);
naive Bayes does not involve any parameters.
Notice that none of these parameters are ``hyperparameters'',
in that they do not control the flexibility of the fitted prediction rule directly;
this allows us to optimize the parameters on the training set for the \texttt{all} column.
In the case of CVAPs, we optimise the parameters by minimising the cumulative Brier loss over all folds
(so that the same parameters are used for all folds).
To apply Platt's method to calibrate the scores generated by the underlying algorithms
we use logistic regression, namely the function \texttt{mnrfit} within MATLAB's Statistics toolbox.
For isotonic regression calibration we use the implementation of the PAVA in the R package \texttt{fdrtool}
(namely, the function \texttt{monoreg}).
Missing values are handled using the Weka filter \texttt{ReplaceMissingValues},
which replaces all missing values for nominal and numeric attributes with the modes and means from the training set.

\ifCONF
  For further experimental results,
  see \cite{arXiv1511-local}.
\fi

\subsection*{Additional experimental results}

\ifPortrait Figures~\ref{fig:log-forest} and~\ref{fig:Brier-forest} show \fi
\ifLandscape Figure~\ref{fig:forest} shows \fi
our results for the \texttt{covertype} data set
(available from the UCI repository \cite{UCI:data} and also known as \texttt{forest}).
In converting this multiclass classification problem to binary we follow \cite{Caruana/NM:2006}:
treat the largest class as 1 and the rest as 0,
and only consider a random and randomly permuted subset consisting of $30,000$ observations;
the first $5000$ of those observations are used as the training set and the remaining $25,000$ as the test set.
The CVAP results are still at the bottom of the plots and very stable;
and the values at the \texttt{all} column are still particularly unstable.

\ifLandscape
  \begin{figure*} 
    \begin{center}
      \hspace*{-23mm} 
      \includegraphics[trim = 0mm 10mm 0mm 3mm, clip, width=1.4\textwidth]{logForestLandscape.pdf}
      \hspace*{-23mm} 
      \includegraphics[trim = 0mm 10mm 0mm 3mm, clip, width=1.4\textwidth]{brierForestLandscape.pdf}
    \end{center}
    \caption{The analogue of Figure~\ref{fig:adult} for the \texttt{covertype} data set.}
    \label{fig:forest}
  \end{figure*}
\fi

\ifPortrait
  \begin{figure*} 
    \begin{center}
      \includegraphics[trim = 0mm 10mm 0mm 12mm, clip, width=\textwidth]{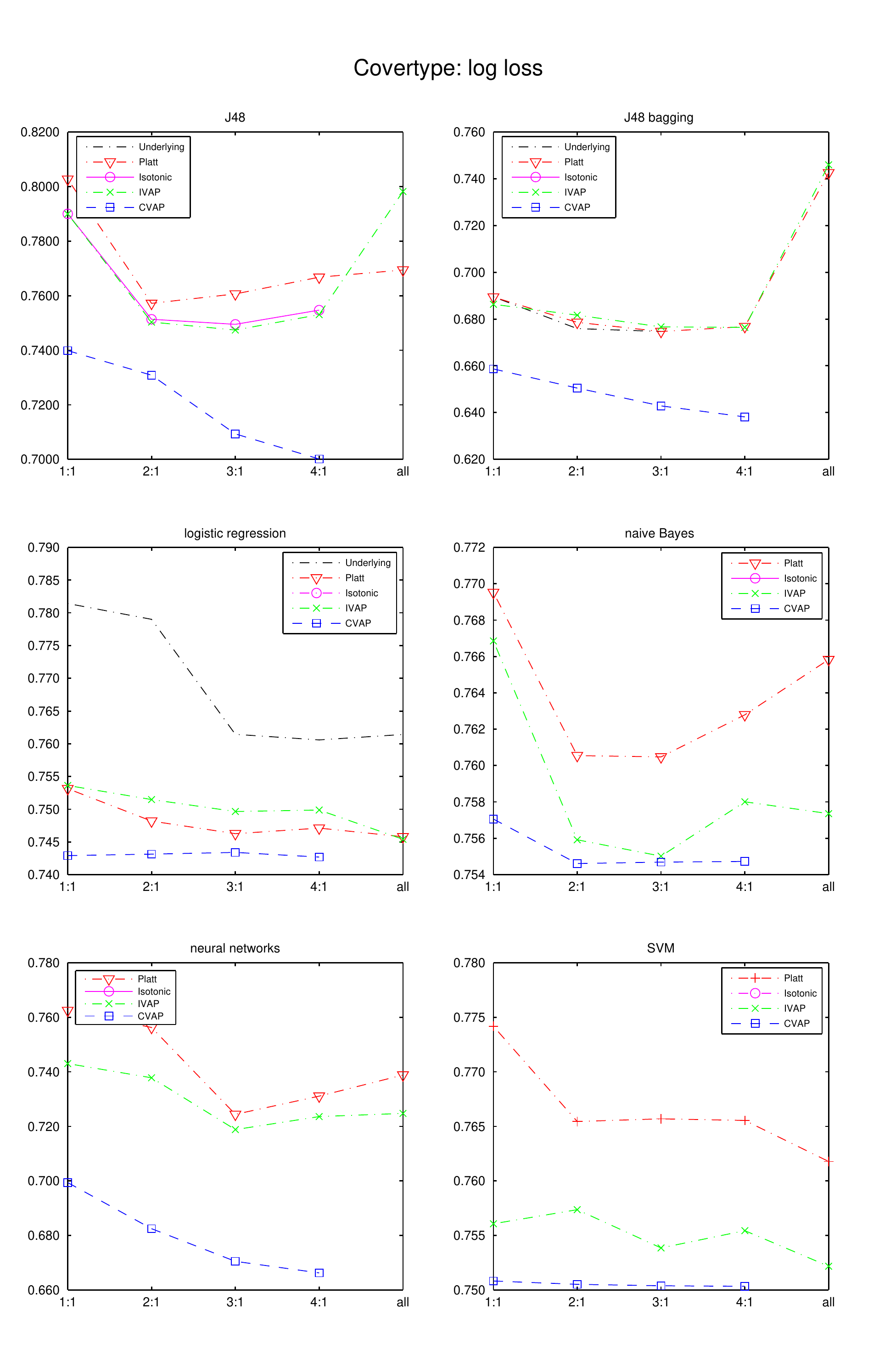}
    \end{center}
    \caption{The analogue of Figure~\ref{fig:log-adult} for the \texttt{covertype} data set.}
    \label{fig:log-forest}
  \end{figure*}

  \begin{figure*} 
    \begin{center}
      \includegraphics[trim = 0mm 10mm 0mm 12mm, clip, width=\textwidth]{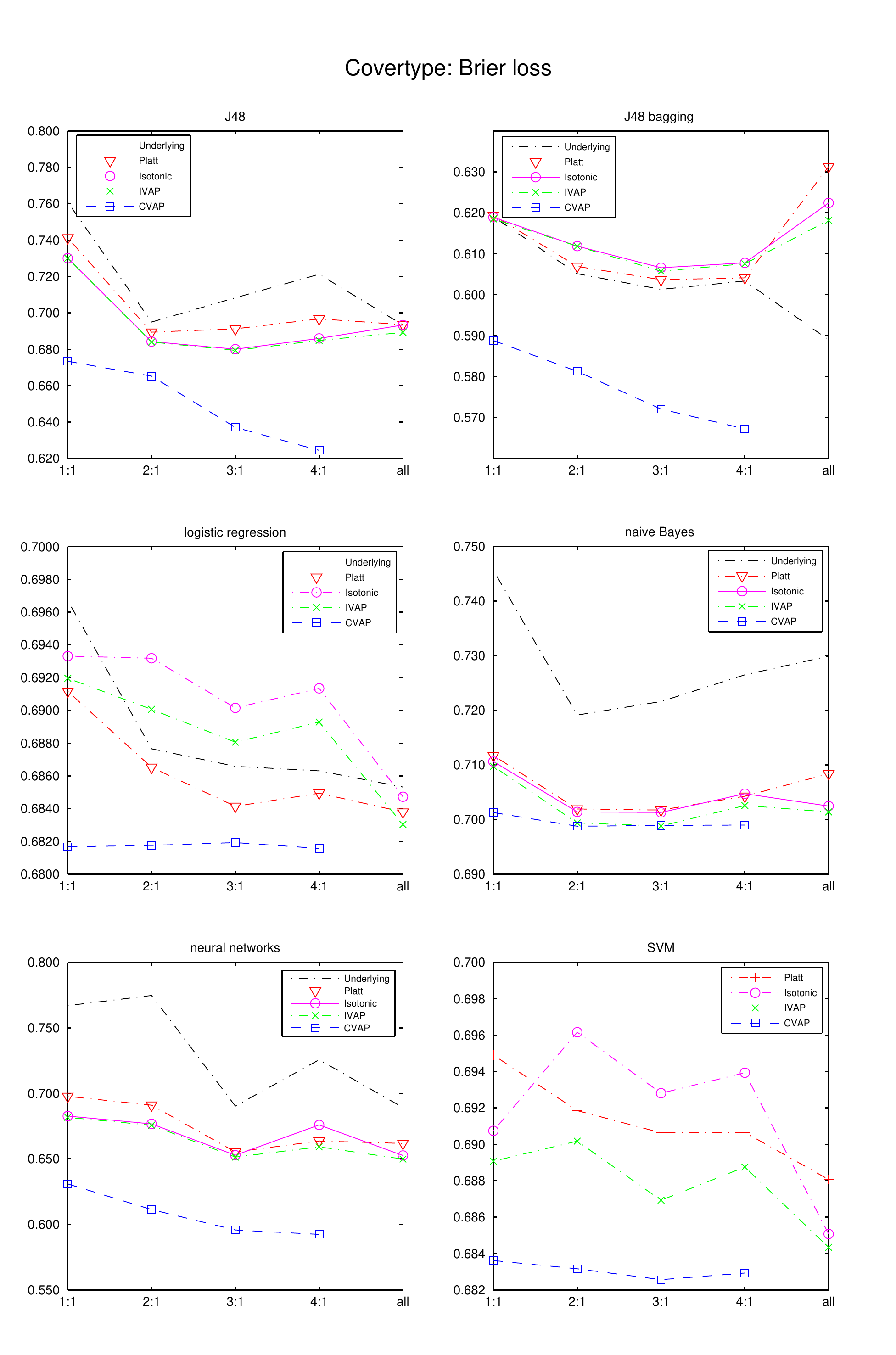}
    \end{center}
    \caption{The analogue of Figure~\ref{fig:Brier-adult} for the \texttt{covertype} data set.}
    \label{fig:Brier-forest}
  \end{figure*}
\fi

Similar results for the \texttt{insurance}, \texttt{Bank Marketing}, \texttt{Spambase},
and \texttt{Statlog German Credit Data} data sets
are shown in
\ifLandscape Figures~\ref{fig:insurance}, \ref{fig:bank}, \ref{fig:Spambase}, and \ref{fig:Statlog}, respectively. \fi
\ifPortrait Figures~\ref{fig:log-insurance}--\ref{fig:Brier-Statlog}. \fi
The data sets are split into training and test sets in proportion 2:1,
without randomization.
\ifFULL\bluebegin
  In particular, for the \texttt{insurance} data set we ignore the original split into the training ($5822$ observations)
  and test ($4000$ observations) sets.
\blueend\fi%
Since the values for the \texttt{all} column are so unstable,
the reader might prefer to disregard them in the case of IVAP, Platt, and Isotonic.
In
\ifLandscape Figures~\ref{fig:insurance}, \ref{fig:bank}, and \ref{fig:Spambase} \fi
\ifPortrait Figures~\ref{fig:log-insurance}--\ref{fig:Brier-Spambase} \fi
the CVAP results tend to be at the bottom of the plots.
The \texttt{Statlog German Credit Data} data set is much more difficult,
and all results in
\ifLandscape Figure~\ref{fig:Statlog} \fi
\ifPortrait Figures~\ref{fig:log-Statlog}--\ref{fig:Brier-Statlog} \fi
are poor and somewhat mixed;
however, they still demonstrate that CVAPs and IVAPs produce stable results and avoid the occasional bad failures
characteristic of the alternative calibration methods.

\ifLandscape
  \begin{figure*} 
    \begin{center}
    \hspace*{-23mm} 
      \includegraphics[trim = 0mm 8mm 0mm 0mm, clip, width=1.4\textwidth]{logInsuranceLandscape.pdf}
    \hspace*{-23mm} 
      \includegraphics[trim = 0mm 8mm 0mm 0mm, clip, width=1.4\textwidth]{brierInsuranceLandscape.pdf}
    \end{center}
    \caption{The analogue of Figure~\ref{fig:adult} for the \texttt{insurance} data set.}
    \label{fig:insurance}
  \end{figure*}
\fi

\ifPortrait
  \begin{figure*} 
    \begin{center}
      \includegraphics[trim = 0mm 10mm 0mm 12mm, clip, width=\textwidth]{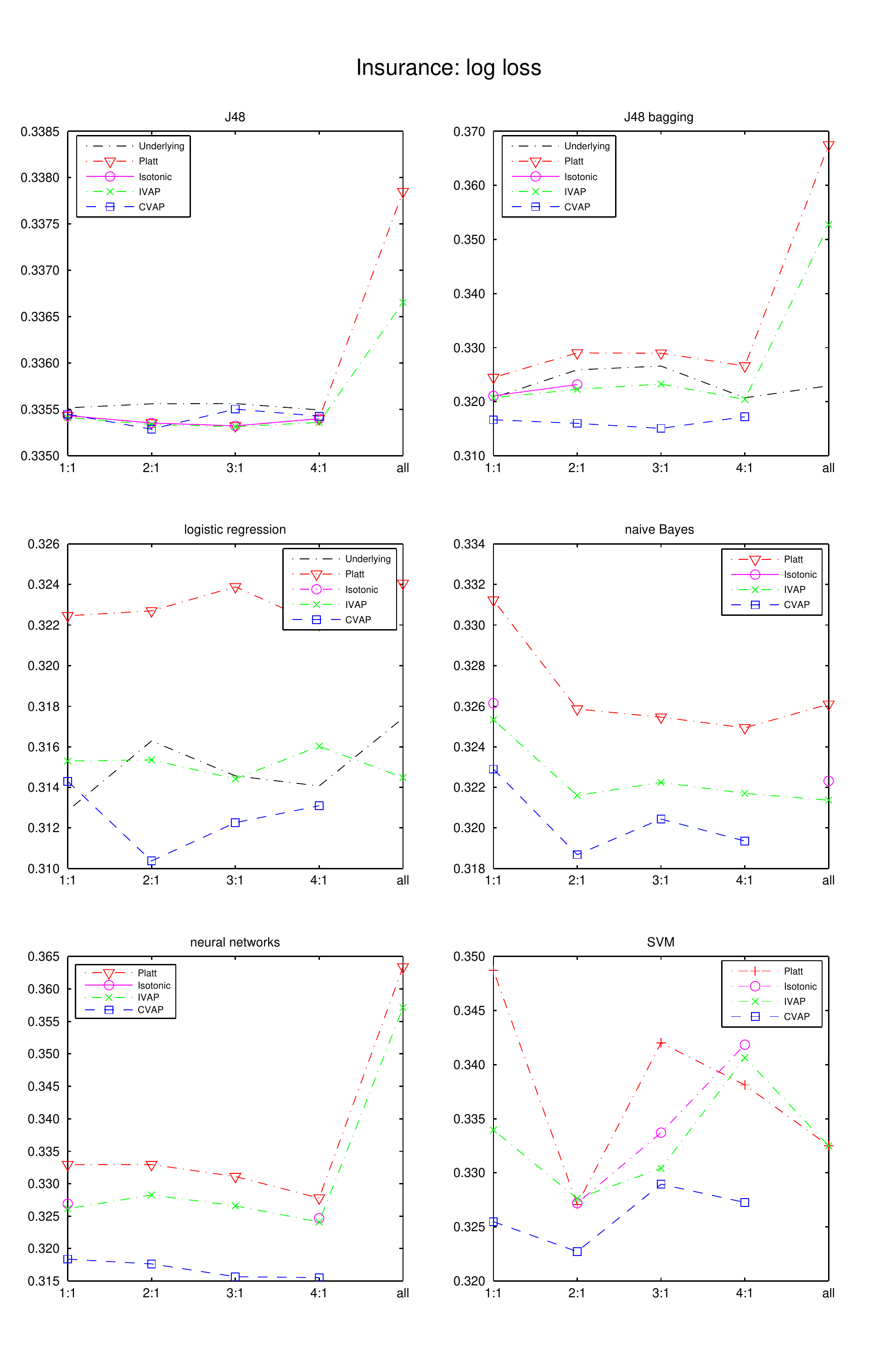}
    \end{center}
    \caption{The analogue of Figure~\ref{fig:log-adult} for the \texttt{insurance} data set.}
    \label{fig:log-insurance}
  \end{figure*}

  \begin{figure*} 
    \begin{center}
      \includegraphics[trim = 0mm 10mm 0mm 12mm, clip, width=\textwidth]{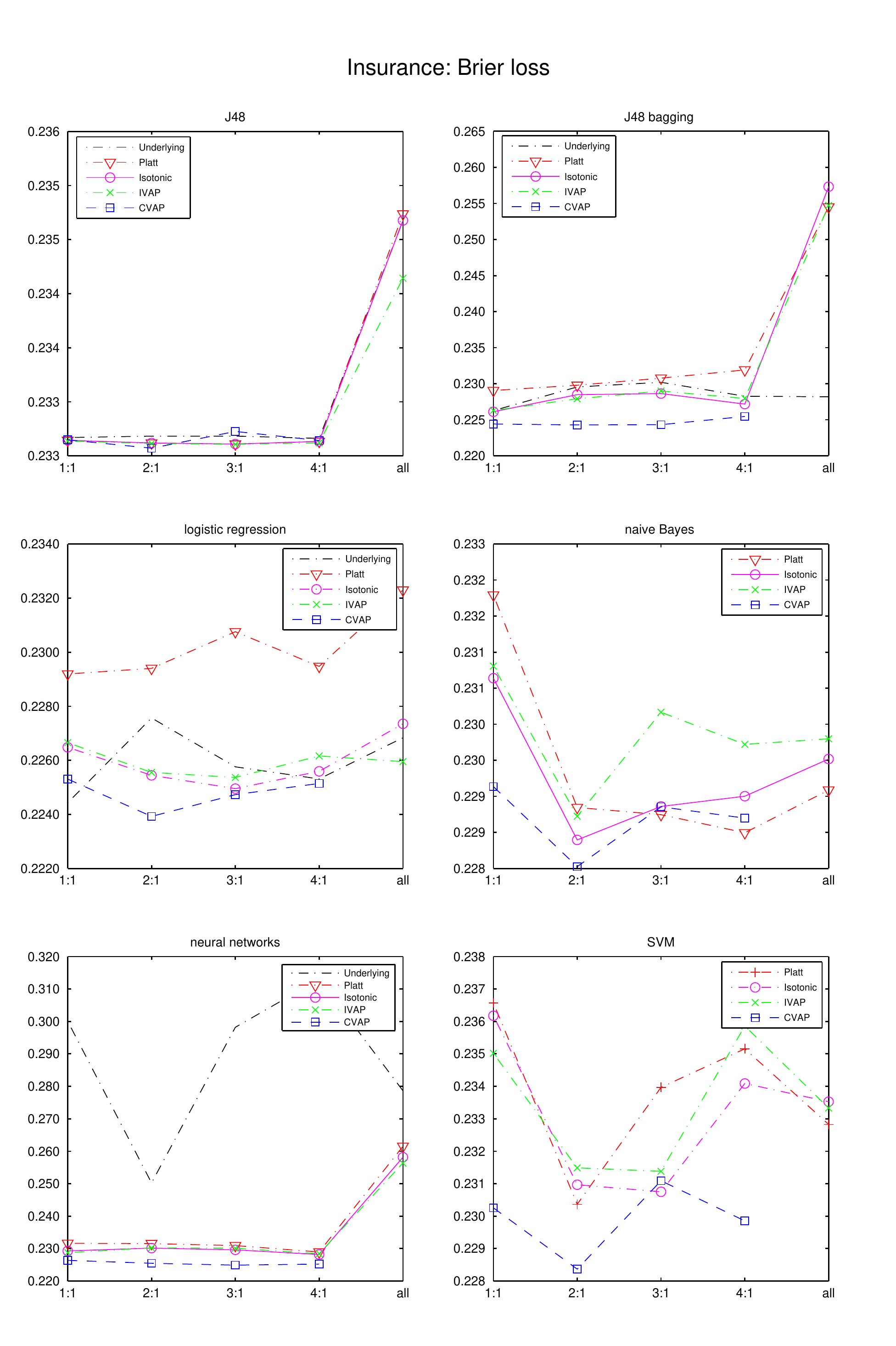}
    \end{center}
    \caption{The analogue of Figure~\ref{fig:Brier-adult} for the \texttt{insurance} data set.}
    \label{fig:Brier-insurance}
  \end{figure*}
\fi

\ifLandscape
  \begin{figure*} 
    \begin{center}
    \hspace*{-23mm} 
      \includegraphics[trim = 0mm 10mm 0mm 0mm, clip, width=1.4\textwidth]{logBankMarketingLandscape.pdf}
    \hspace*{-23mm} 
      \includegraphics[trim = 0mm 10mm 0mm 0mm, clip, width=1.4\textwidth]{brierBankMarketingLandscape.pdf}
    \end{center}
    \caption{The analogue of
      Figure~\ref{fig:adult}
      for the \texttt{Bank Marketing} data set.}
    \label{fig:bank}
  \end{figure*}
\fi

\ifPortrait
  \begin{figure*} 
    \begin{center}
      \includegraphics[trim = 0mm 10mm 0mm 12mm, clip, width=\textwidth]{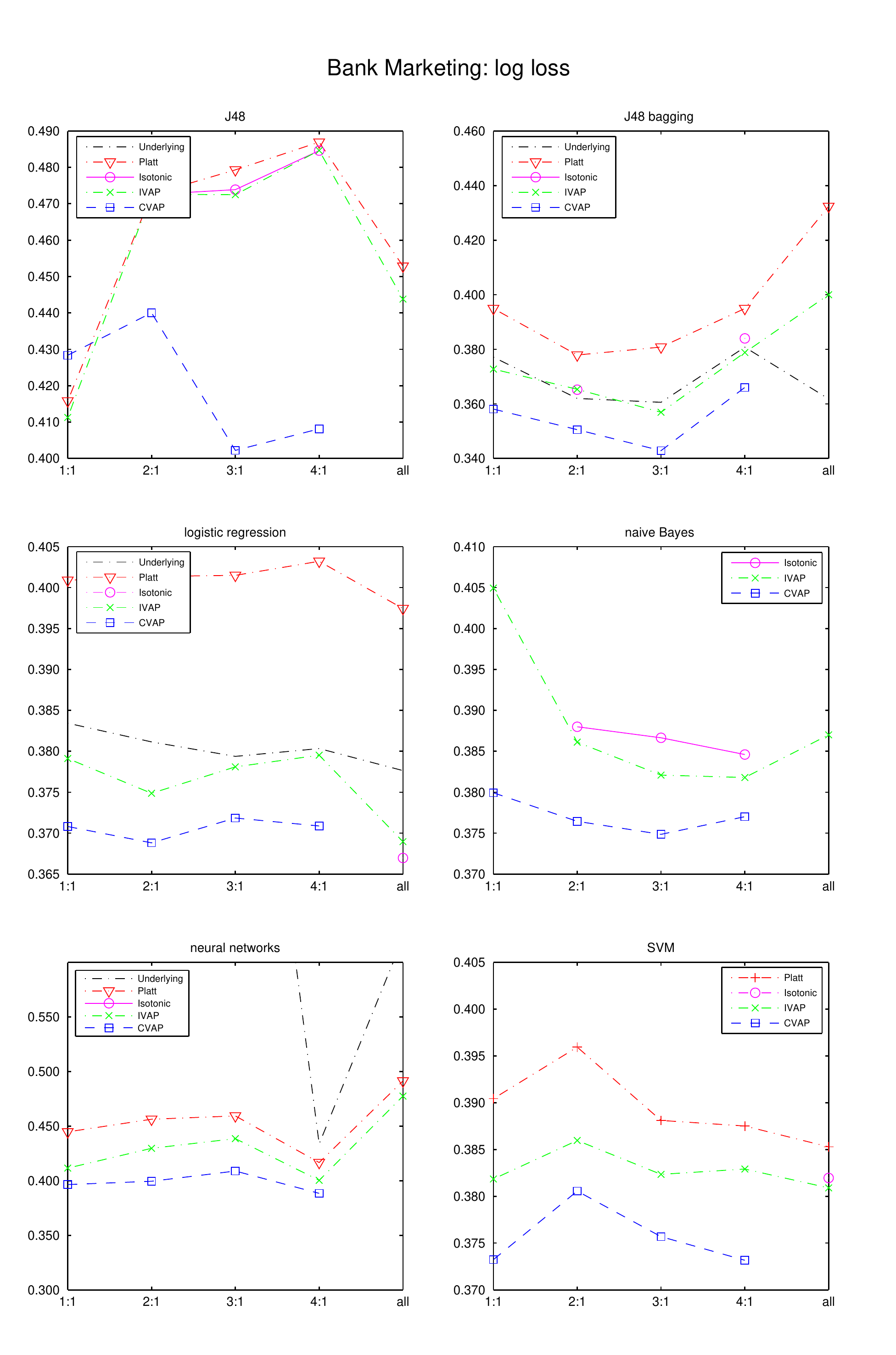}
    \end{center}
    \caption{The analogue of Figure~\ref{fig:log-adult} for the \texttt{Bank Marketing} data set.}
    \label{fig:log-bank}
  \end{figure*}

  \begin{figure*} 
    \begin{center}
      \includegraphics[trim = 0mm 10mm 0mm 12mm, clip, width=\textwidth]{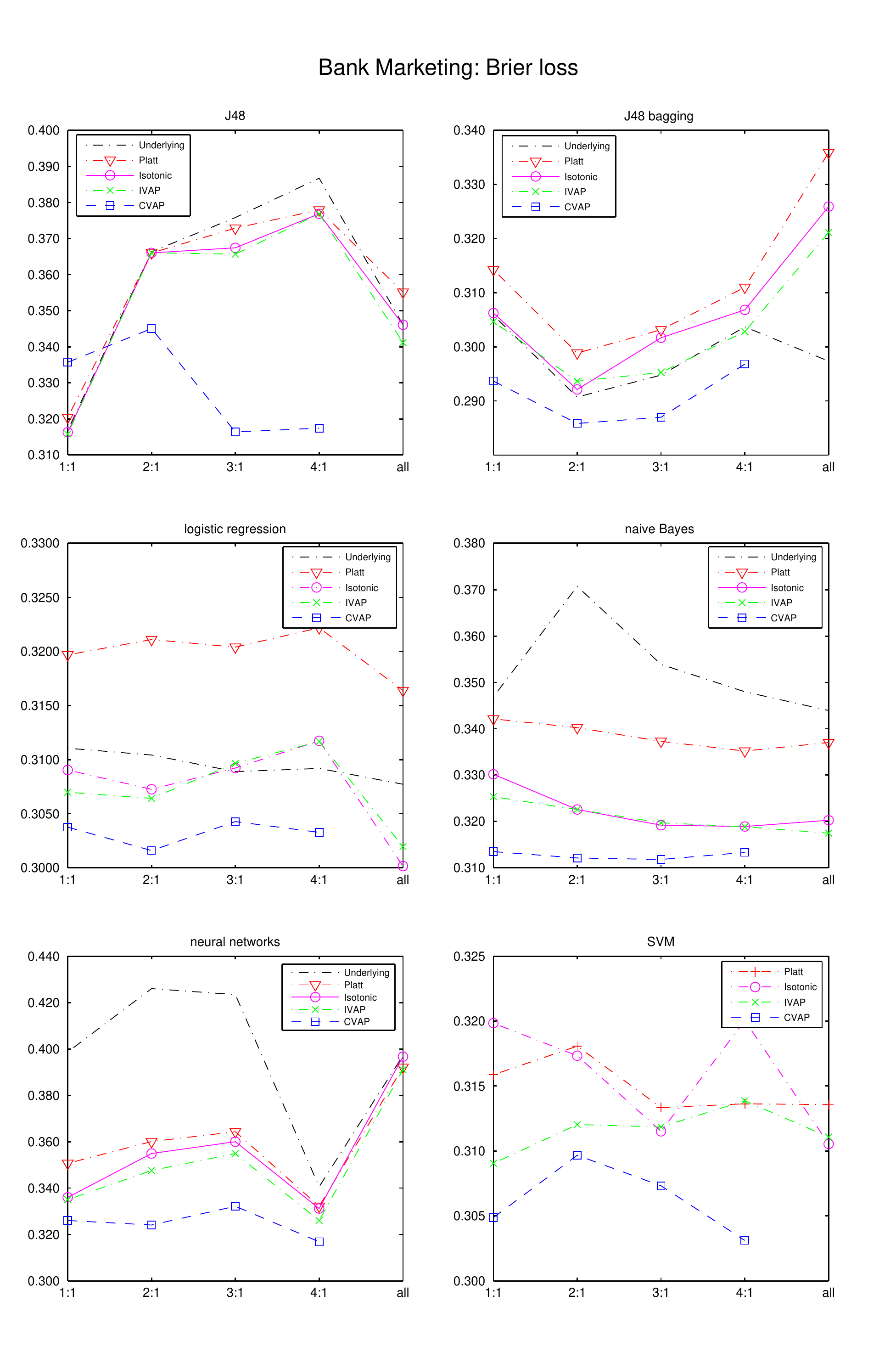}
    \end{center}
    \caption{The analogue of Figure~\ref{fig:Brier-adult} for the \texttt{Bank Marketing} data set.}
    \label{fig:Brier-bank}
  \end{figure*}
\fi

\ifLandscape
  \begin{figure*} 
    \begin{center}
    \hspace*{-23mm} 
      \includegraphics[trim = 0mm 10mm 0mm 0mm, clip, width=1.4\textwidth]{logSpambaseLandscape.pdf}
    \hspace*{-23mm} 
      \includegraphics[trim = 0mm 10mm 0mm 0mm, clip, width=1.4\textwidth]{brierSpambaseLandscape.pdf}
    \end{center}
    \caption{The analogue of Figure~\ref{fig:adult} for the \texttt{Spambase} data set.}
    \label{fig:Spambase}
  \end{figure*}
\fi

\ifPortrait
  \begin{figure*} 
    \begin{center}
      \includegraphics[trim = 0mm 10mm 0mm 12mm, clip, width=\textwidth]{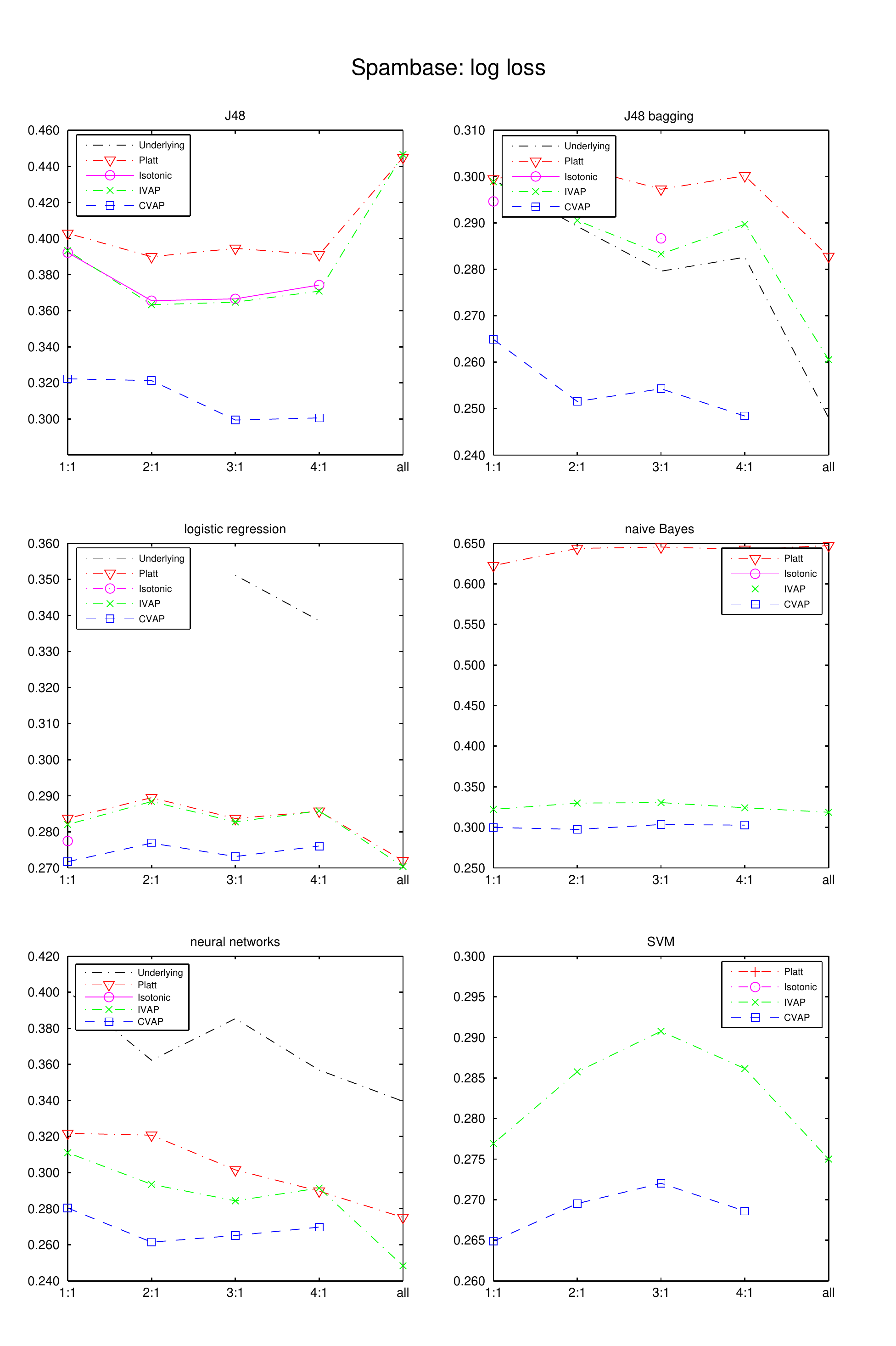}
    \end{center}
    \caption{The analogue of Figure~\ref{fig:log-adult} for the \texttt{Spambase} data set.}
    \label{fig:log-Spambase}
  \end{figure*}

  \begin{figure*} 
    \begin{center}
      \includegraphics[trim = 0mm 10mm 0mm 12mm, clip, width=\textwidth]{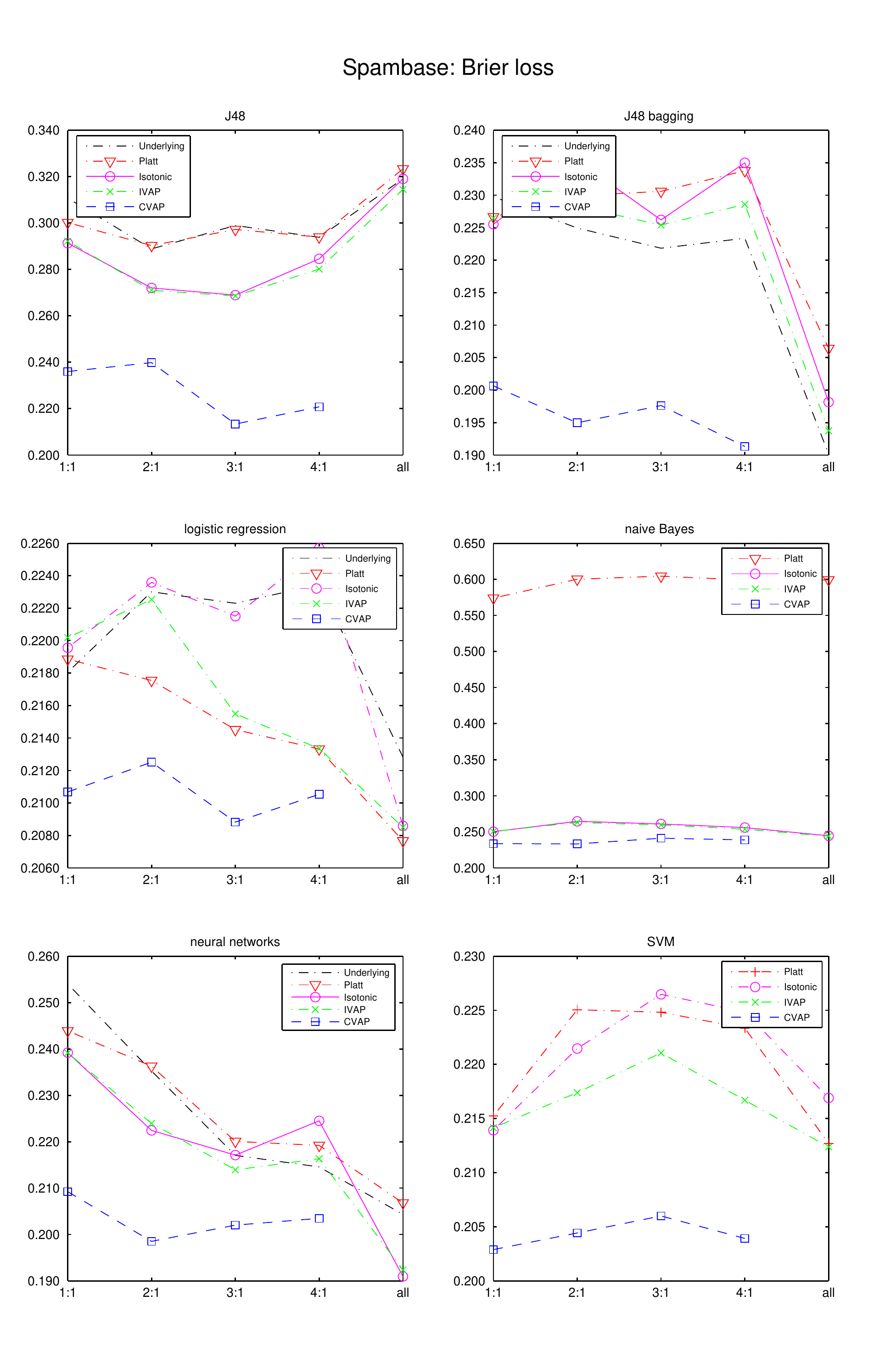}
    \end{center}
    \caption{The analogue of Figure~\ref{fig:Brier-adult} for the \texttt{Spambase} data set.}
    \label{fig:Brier-Spambase}
  \end{figure*}
\fi

\ifLandscape
  \begin{figure*} 
    \begin{center}
    \hspace*{-23mm} 
      \includegraphics[trim = 0mm 10mm 0mm 0mm, clip, width=1.4\textwidth]{logStatlogLandscape.pdf}
    \hspace*{-23mm} 
      \includegraphics[trim = 0mm 10mm 0mm 0mm, clip, width=1.4\textwidth]{brierStatlogLandscape.pdf}
    \end{center}
    \caption{The analogue of Figure~\ref{fig:adult} for the data set \texttt{Statlog German Credit Data}.}
    \label{fig:Statlog}
  \end{figure*}
\fi

\ifPortrait
  \begin{figure*} 
    \begin{center}
      \includegraphics[trim = 0mm 10mm 0mm 12mm, clip, width=\textwidth]{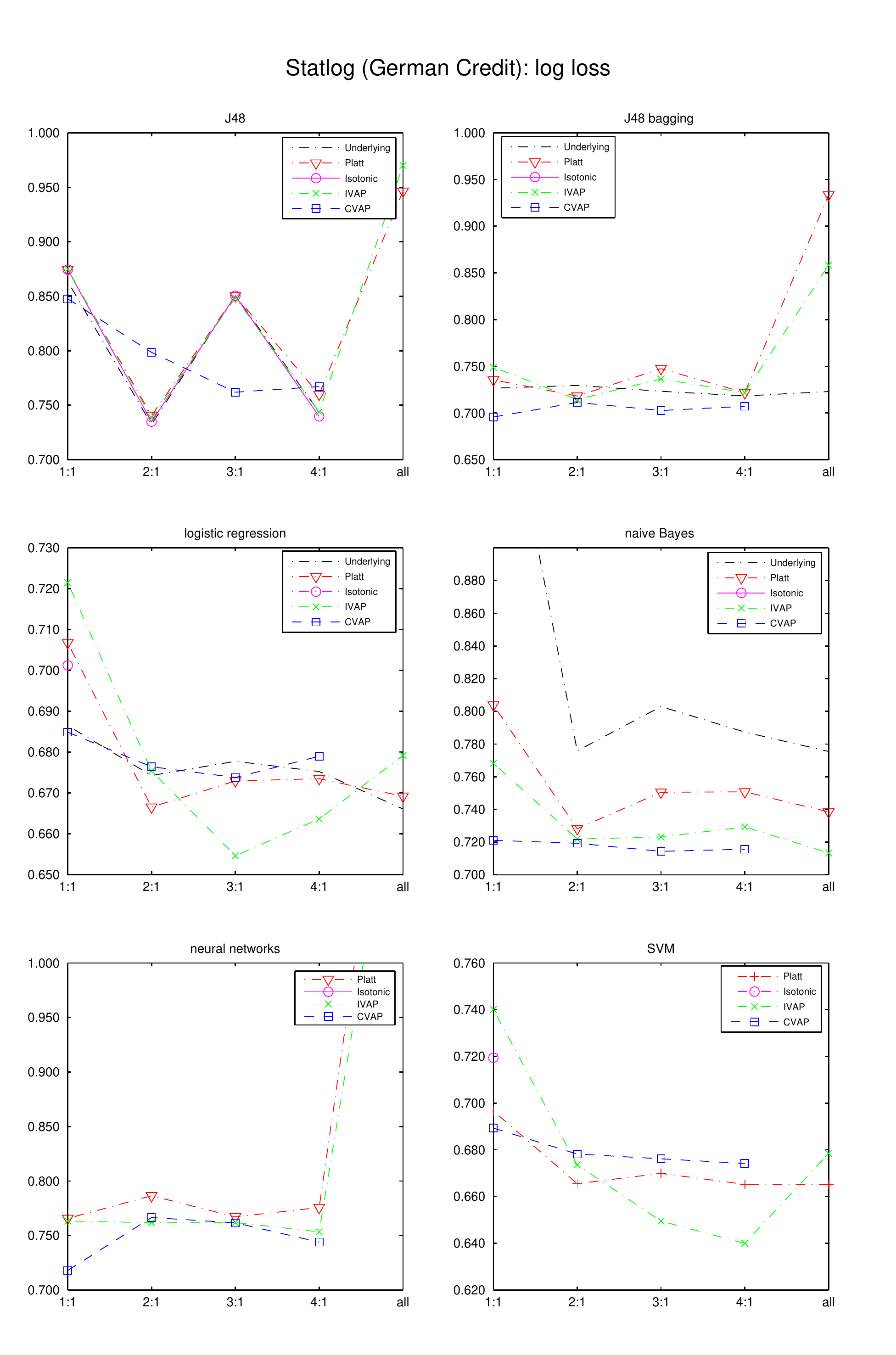}
    \end{center}
    \caption{The analogue of Figure~\ref{fig:log-adult} for the data set \texttt{Statlog German Credit Data}.}
    \label{fig:log-Statlog}
  \end{figure*}

  \begin{figure*} 
    \begin{center}
      \includegraphics[trim = 0mm 10mm 0mm 12mm, clip, width=\textwidth]{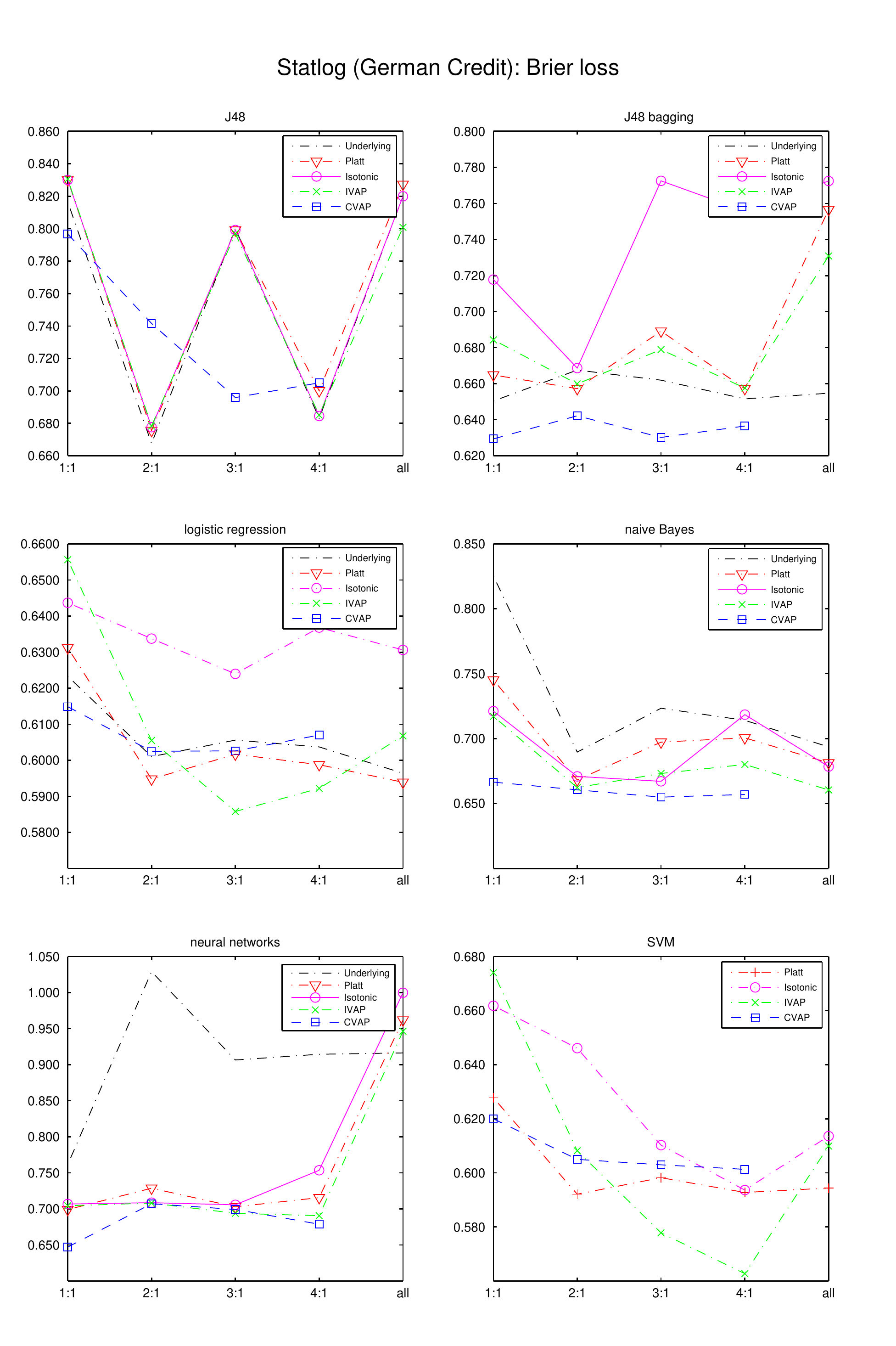}
    \end{center}
    \caption{The analogue of Figure~\ref{fig:Brier-adult} for the data set \texttt{Statlog German Credit Data}.}
    \label{fig:Brier-Statlog}
  \end{figure*}
\fi

And finally, Figures~\ref{fig:log-adult10} and~\ref{fig:Brier-adult10}
show the results for log loss and Brier loss, respectively, for the \texttt{adult} data set
and for a wide range of the ratios of the size of the proper training set to the calibration set.
The left-most column of each plot is $1:9$, which means,
in the case of Platt's method, isotonic regression, and IVAPs,
that 10\% of the training set was allocated to the proper training set and the rest to the calibration set.
In the case of CVAPs, $1:9$ means that the training set was split into 10 folds,
each of them in turn was used as the proper training set, and the rest were used as the calibration set;
the results were merged using the minimax procedure as described in Section~\ref{sec:merging}.
In the case of the underlying algorithm, $1:9$ means that only 10\% of the training set
was in fact used for training (the same 10\% as for the first three calibration methods).
The other columns are $1:8$, $1:7$,\ldots, $1:2$,
$1:1$ (which corresponds to $1:1$ in
\ifLandscape Figure~\ref{fig:adult}\fi\ifPortrait Figures~\ref{fig:log-adult} and~\ref{fig:Brier-adult}\fi),\ldots,
$4:1$ (which corresponds to $4:1$ in
\ifLandscape Figure~\ref{fig:adult}\fi\ifPortrait Figures~\ref{fig:log-adult} and~\ref{fig:Brier-adult}\fi,
i.e., to the standard procedure of 5-fold cross-validation),
$5:1$,\ldots, $9:1$ (the latter corresponds to the other standard cross-validation procedure,
that of 10-fold cross-validation);
the results in those columns are analogous to those in the column $1:9$.
In order not to duplicate the information we gave earlier for the \texttt{adult} data set,
we give the results for a randomly permuted \texttt{adult} data set.
There is not much difference between 5 and 10 folds for most underlying algorithms
(logistic regression behaves unusually in that its performance deteriorates
as the size of the proper training set increases,
perhaps because less data are available for calibration).

\begin{figure*} 
  \begin{center}
    \includegraphics[trim = 0mm 10mm 0mm 12mm, clip, width=\textwidth]{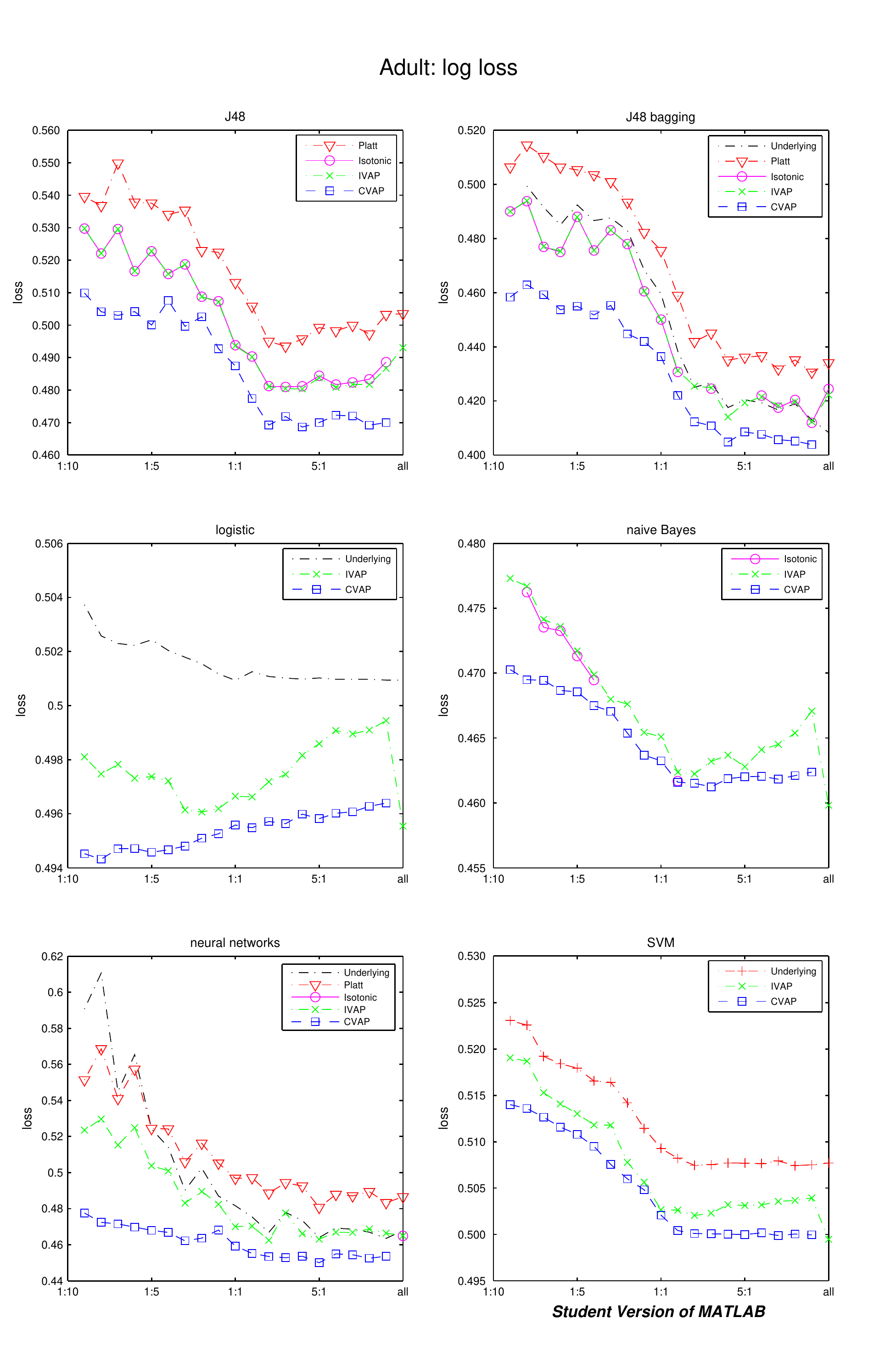}
  \end{center}
  \caption{The log loss on the \texttt{adult} data set of the six prediction algorithms
    and four calibration methods}
  \label{fig:log-adult10}
\end{figure*}

\begin{figure*} 
  \begin{center}
    \includegraphics[trim = 0mm 10mm 0mm 12mm, clip, width=\textwidth]{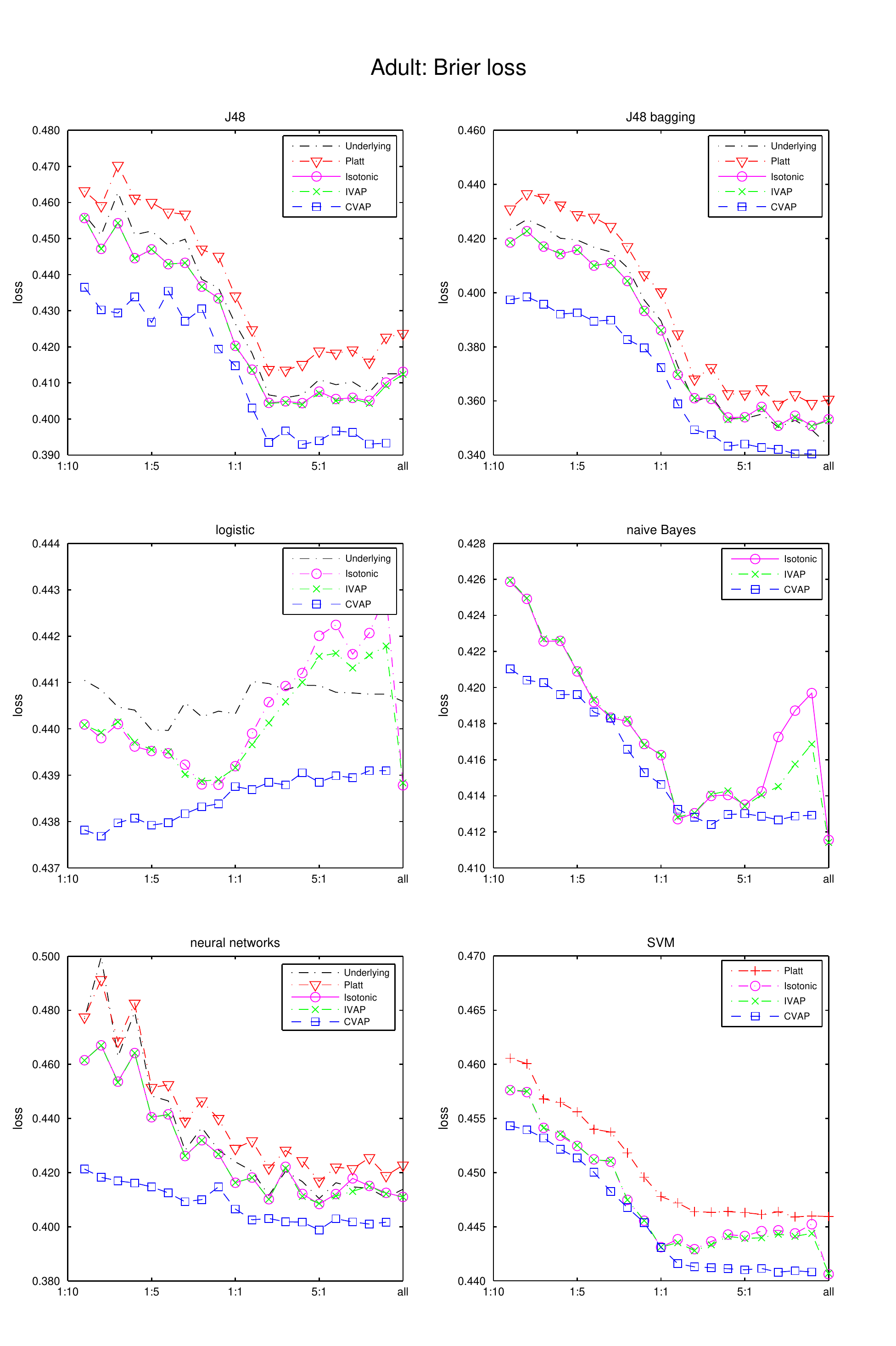}
  \end{center}
  \caption{The analogue of Figure~\ref{fig:log-adult10} for the Brier loss function}
  \label{fig:Brier-adult10}
\end{figure*}

\ifFULL\bluebegin
  \subsection*{Our plans}

  Experiments that we are doing for the paper:
  \begin{itemize}
  \item
    Run the precise IVAP
    (i.e., the IVAP whose outputs are replaced by probability predictions,
    as explained in Section~\ref{sec:merging})
    on several data sets (see below, or binary)
    and compare the results with Platt's calibration and isotonic regression.
    This looks the cleanest comparison of the 3 calibration methods.
    The data sets can be split into 3 equal parts
    (proper training, calibration, and test sets).
    For preliminary results,
    see Tables~\ref{tab:log} (for log loss)
    and~\ref{tab:Brier} (for Brier loss);
    the \texttt{adult} data set is binary.
    In those tables, the parameters were chosen based on the proper training set
    using the functions \texttt{tune.svm}
    (for choosing $C$ in the case of SVM)
    and \texttt{gbm.perf} for the number of decision trees in the case of boosting).
    The next step: use the same hold-out fold for parameter tuning
    as for calibration,
    as mentioned earlier (the end of Section~\ref{sec:CVAP}).
  \item
    Run CVAPs on the same data sets,
    but now split into 2 equal parts unless there is a standard split into training and test sets,
    for future reference and without any comparisons with alternative calibration methods.
    To apply CVAPs to multi-class data sets,
    merge their predictions using one of the standard methods of pairwise coupling,
    as reviewed in \cite{Wu/etal:2004};
    we will be using the PKPD method
    (\cite{Wu/etal:2004}, Section~2.3; \cite{Price/etal:1995}).
    \ifFULL\bluebegin
      The experimental results in \cite{Wu/etal:2004}
      consist of Tables~2, 3 (artificial data) and 5, 6, 7, 8 (real data).
      The main one is Table~8 (log loss on real data),
      in which PKPD looks the best algorithm overall.
      This is also true about Table~5 (Brier loss on real data)
      and Table~3 (MSE on artificial data).
      Tables~2, 6, and 7 are about the number of errors
      (and so do not evaluate probability predictions).
      A big advantage of PKPD is that it is given by a simple formula.
    \blueend\fi
  \item
    Compare CVAP with the modified CVAP,
    in which we merge the multiprobability predictions made by each component CVAP
    and then merge those merged predictions
    (i.e., use a 2-step procedure of merging instead of the 1-step procedure used in CVAPs).
    This would be especially convincing if IVAP never perform much worse than the Platt and IR methods.
    At the second step, there are two natural ways of merging probabilities:
    \begin{itemize}
    \item
      the minimax method for log loss proposed in Section~\ref{sec:merging}
      (but now $p_0=p_1$):
      $\GM(p)/(\GM(1-p)+\GM(p))$,
      where $p$ is the vector of probabilities output by the component IVAPs;
    \item
      arithmetic mean,
      which is in fact the minimax method for Brier loss proposed in Section~\ref{sec:merging}:
      $$
        \frac1K
        \sum_{k=1}^K
        p^k;
      $$
      however, it is easy to check that this method leads to the same result as the original CVAP
      (with the Brier loss-minimax merging).
    \end{itemize}
  \item
    Computation time for IVAPs for really large data sets
    (using \texttt{test.art.ivap.R}).
  \end{itemize}

  The data sets covered by our experiments:
  \begin{itemize}
  \item
    The \texttt{adult} data set.
    For the comparison experiments
    (Tables~\ref{tab:log} and~\ref{tab:Brier}),
    we use the original split into the training and test sets;
    the sizes of the two sets are (after the removal of observations with missing values) $30,162$ and $15,060$,
    respectively.
  \item
    The \texttt{forest} data set
    (\url{https://archive.ics.uci.edu/ml/datasets/Covertype}).
    According to Vapnik,
    learning is slow for this data set:
    after 10K observations, the percentage of correct predictions is $80\%$;
    after 500K it becomes $95\%$.
    It gives, however, awful results; not exchangeable.
  \end{itemize}
  Some of these data sets contain missing values for attributes.
  The missing values within Weka are handled by the ``Replace Missing Values'' method
  $$
    \text{http://weka.sourceforge.net/doc.dev/weka/filters/unsupervised/attribute/ReplaceMissingValues.html}
  $$

  The underlying algorithms in R:
  \begin{itemize}
  \item
    Boosted trees (from the R package \texttt{gbm}) with the number of trees 100.
    The method seems to overfit in the case of Tables~\ref{tab:log} and~\ref{tab:Brier}:
    there are only two different probabilities for the whole test set.
  \item
    SVM (\texttt{svm} from the package \texttt{e1071}) with linear kernels and $C = 0.01$.
  \end{itemize}

  Other data sets to try:
  \begin{itemize}
  \item
    The USPS data set, as in \cite{\OCMXII}, Tables~1 and~2,
    using the 1-NN conformity score based on tangent distance.
    Do this both for the original split into the training and test sets
    (especially awkward)
    and for a random split.
    [Less important: Can the 1-NN VAP be also run on the USPS data set?]
  \item
    The NIST data set, which is much bigger.
    Is there a standard split into training / test subsets?
  \item
    Part of the UCI repository:
    CoIL Challenge 2000 (Predicting Caravan Policy Ownership).
  \item
    I suggested to Ivan and Valya on 1 March 2015:
    Reuters and Web (used by Platt),
    SVM should work OK for them since this is what he used
    (but the other 5 methods on Ivan's list would also be interesting),
    it might, however, require a lot of pre-processing;
    KDD-98 in addition to TIC (used by Zadrozny and Elkan);
    Zadrozny and Elkan use Pendigits and 20 newsgroups,
    but they applied naive Bayes to them, and it might not be easy for us to implement it.
    For GBM, Spambase is very good.
  \end{itemize}

  If time left or later:
  \begin{itemize}
  \item
    Run CVAPs on tiny data sets, as in \cite{\OCMVII}, Tables~1 and~2.
  \item
    Repeat the experiments in \cite{Caruana/NM:2006}.
    Do similar experiments for CVAPs.
    All data sets in that paper are binary (two classes only).
  \item
    Implement CVAPs as part of an R package (using C code) and submit it to CRAN.
  \item
    Produce calibration pictures for boosted decision trees
    (or whatever the best algorithm is):
    \begin{itemize}
    \item
      the original algorithm
    \item
      Platt calibration \cite{Platt:2000}
    \item
      Zadrozny--Elkan calibration \cite{Zadrozny/Elkan:2001,Zadrozny/Elkan:2002}
    \item
      precise IVAP calibration
    \item
      not really comparable: CVAP calibration
    \end{itemize}
  \end{itemize}

  Valya is working on the USPS data set.
  To apply CVAPs to multi-class data sets,
  merge their predictions using one of the standard methods of pairwise coupling,
  as reviewed in \cite{Wu/etal:2004};
  we will be using the PKPD method
  (\cite{Wu/etal:2004}, Section~2.3; \cite{Price/etal:1995}).

  Pairwise coupling is used in the function \texttt{svm}
  in the \texttt{e1071} R package:
  it is applied on top of Platt's algorithm.
\blueend\fi

\section{Conclusion}

This paper introduces two new computationally efficient algorithms for probabilistic prediction,
IVAP, which can be regarded as a regularised form of the calibration method
based on isotonic regression,
and CVAP, which is built on top of IVAP using the idea of cross-validation.
Whereas IVAPs are automatically perfectly calibrated,
the advantage of CVAPs is in their good empirical performance.


This paper does not study empirically upper and lower probabilities produced by IVAPs and CVAPs,
whereas the distance between them provides information about the reliability
of the merged probability prediction.
Finding interesting ways of using this extra information is one of the directions of further research.

\subsection*{Acknowledgments}

We are grateful to the conference reviewers for numerous helpful comments and observations,
to Vladimir Vapnik for sharing his ideas about exploiting synergy between different learning algorithms,
and to participants in the conference \emph{Machine Learning: Prospects and Applications}
(October 2015, Berlin) for their questions and comments.
The first author has been partially supported by EPSRC (grant EP/K033344/1)
and AFOSR (grant ``Semantic Completions'').
The second and third authors are grateful to their home institutions for funding their trips to Montr\'eal\ifnotCONF\
  to attend NIPS 2015\fi.

\ifFULL\bluebegin
\appendix
\section{Ideas about an R package}

Our planned programs: in R (perhaps with inner loops in C) and MATLAB.
This appendix is about R programs.

The required programs:
\begin{itemize}
\item
  The underlying algorithm should be wrapped into two R functions:
  \begin{itemize}
  \item
    one, called \texttt{ul()} (short for ``underlying''),
    is analogous to R's fundamental function \texttt{lm()}
    (and the function \texttt{lda} from the package \texttt{MASS},
    which has an identical interface);
    its result is an object of a new class \texttt{ul};
  \item
    the other, \texttt{predict.ul()},
    is the method of the generic function \texttt{predict()} that orients itself
    to objects of class \texttt{ul}.
  \end{itemize}
\item
  Venn--Abers predictors for this paper (IVAP and CVAP)
  are implemented as the following functions:
  \begin{itemize}
  \item
    \texttt{ivap()} creates a prediction rule for IVAP (an object of class \texttt{ivap})
    from a proper training set and a calibration set;
    it only covers the binary case;
  \item
    \texttt{cvap()} uses \texttt{ivap()}
    to create a prediction rule for CVAP (an object of class \texttt{cvap})
    from a training set;
    covers both the binary case and the multiclass case
    (where pairwise coupling is used on top of binary \texttt{cvap()});
  \item
    methods \texttt{predict.ivap()} and \texttt{predict.cvap()} of \texttt{predict}
    orient themselves to objects of classes \texttt{ivap} and \texttt{cvap},
    respectively.
  \end{itemize}
\end{itemize}
We need to split the underlying algorithm into two parts,
creating the underlying prediction rule and exploiting it,
because \texttt{predict.ul()} is used in two places:
for calibration and for prediction.

The signatures of all these functions/methods are:
\begin{verbatim}
  ul(formula, data, subset)
  predict.ul(object, newdata)
  ivap(formula, data, subset1, subset2)
  predict.ivap(object, newdata)
  cvap(formula, data, subset, K, binary = TRUE)
  predict.cvap(object, newdata)
\end{verbatim}
The \texttt{formula} is, as for \texttt{lm()}, something like \verb"y~x".
The \texttt{object} should be of a suitable class (\texttt{ul}, \texttt{ivap}, or \texttt{cvap}).
The three methods for the generic function \texttt{predict()} take the usual arguments.
In \texttt{ivap()},
\texttt{subset1} are the indices of the proper training set in \texttt{data},
and \texttt{subset2} are the indices of the calibration set in \texttt{data}.
In \texttt{cvap}, \texttt{K} is the number of folds,
and the option \texttt{binary} determines whether the predictor is binary
or pairwise coupling is used.

\subsection*{The outputs}

\begin{itemize}
\item
  The function \texttt{ul()} outputs an object of S3 class \texttt{ul},
  which is a list whose components must contain \texttt{lev},
  the levels of the label (this is the case for \texttt{lda}).
\item
  The function \texttt{ivap()} outputs an object of S3 class \texttt{ivap},
  which is a list with the following components:
  \begin{itemize}
  \item
    an object of class \texttt{ul} for computing scores
    based on the proper training set \verb"data[subset1]";
  \item
    the binary search tree;
    at this time the following more primitive structure is used instead:
    \begin{itemize}
    \item
      the list of \emph{keys}, i.e., the vector $s'$
      (the scores of the objects in the calibration set \verb"data[subset2]"
      with duplicates removed);
    \item
      the vector $F^0$;
    \item
      the vector $F^1$.
    \end{itemize}
  \end{itemize}
\item
  The function \texttt{cvap()} outputs an object of S3 class \texttt{cvap},
  which is vector of length $K$ of \texttt{ivap} objects.
\item
  The method \texttt{predict.ul} outputs a vector of scores
  for the test set.
\item
  The method \texttt{predict.ivap} outputs a list
  whose components are a vector of lower probabilities and a vector of upper probabilities
  for the test set.
\item
  The method \texttt{predict.ivap} outputs
  a vector of probabilities for the test set.
\end{itemize}

\subsection*{Testing R code}

There is no need to use C for CVAPs at the stage of creating a CVAP object (prediction rule);
it remains to write a C function for \texttt{predict.cvap.}

The computational efficiency of the R code for IVAP is being tested on an artificial data set:
$(x_i,y_i)$ are generated in the IID fashion,
$y_i\in\{0,1\}$, $y_i=1$ with probability $1/2$,
and $x_i=y_i+\xi_i$, where $\xi\sim N(0,1)$.

Some toy examples are given in Tables~\ref{tab:toy3} and~\ref{tab:toy4}.
The calibration scores are assumed to be $1,2,3$ and $1,2,3,4$, respectively.

\begin{table}
  \begin{center}
    \begin{tabular}{c|cc}
      labels & $F^0$ & $F^1$ \\
      \hline
      $(0,0,0)$ & $(0,0,0)$ & $(1/4,1/3,1/2)$\\
      $(0,0,1)$ & $(0,0,1/2)$ & $(1/3,1/2,1)$\\
      $(0,1,0)$ & $(0,1/3,1/3)$ & $(1/2,2/3,2/3)$\\
      $(0,1,1)$ & $(0,1/2,2/3)$ & $(1/2,1,1)$\\
      $(1,0,0)$ & $(1/4,1/4,1/4)$ & $(1/2,1/2,1/2)$\\
      $(1,0,1)$ & $(1/3,1/3,1/2)$ & $(2/3,2/3,1)$\\
      $(1,1,0)$ & $(1/2,1/2,1/2)$ & $(3/4,3/4,3/4)$\\
      $(1,1,1)$ & $(1/2,2/3,3/4)$ & $(1,1,1)$
    \end{tabular}
  \end{center}
\caption{The vectors $F^0$ and $F^1$ for different sets of labels
  for three calibration objects with scores $1,2,3$.}\label{tab:toy3}
\end{table}
\begin{table}
  \begin{center}
    \begin{tabular}{c|ccc}
      labels & $F^0$ & $F^1$ \\
      \hline
      $(0,0,0,0)$ & $(0,0,0,0)$ & $(1/5,1/4,1/3,1/2)$\\
      $(0,0,0,1)$ & $(0,0,0,1/2)$ & $(1/4,1/3,1/2,1)$\\
      $(0,0,1,0)$ & $(0,0,1/3,1/3)$ & $(1/3,1/2,2/3,2/3)$\\
      $(0,0,1,1)$ & $(0,0,1/2,2/3)$ & $(1/3,1/2,1,1)$\\
      $(0,1,0,0)$ & $(0,1/4,1/4,1/4)$ & $(2/5,1/2,1/2,1/2)$\\
      $(0,1,0,1)$ & $(0,1/3,1/3,1/2)$ & $(1/2,2/3,2/3,1)$\\
      $(0,1,1,0)$ & $(0,1/2,1/2,1/2)$ & $(1/2,3/4,3/4,3/4)$\\
      $(0,1,1,1)$ & $(0,1/2,2/3,3/4)$ & $(1/2,1,1,1)$\\
      $(1,0,0,0)$ & $(1/5,1/5,1/5,1/5)$ & $(2/5,2/5,2/5,1/2)$\\
      $(1,0,0,1)$ & $(1/4,1/4,1/4,1/2)$ & $(1/2,1/2,1/2,1)$\\
      $(1,0,1,0)$ & $(1/3,1/3,2/5,2/5)$ & $(3/5,3/5,2/3,2/3)$\\
      $(1,0,1,1)$ & $(1/3,1/3,1/2,2/3)$ & $(2/3,2/3,1,1)$\\
      $(1,1,0,0)$ & $(2/5,2/5,2/5,2/5)$ & $(3/5,3/5,3/5,3/5)$\\
      $(1,1,0,1)$ & $(1/2,1/2,1/2,3/5)$ & $(3/4,3/4,3/4,1)$\\
      $(1,1,1,0)$ & $(1/2,3/5,3/5,3/5)$ & $(4/5,4/5,4/5,4/5)$\\
      $(1,1,1,1)$ & $(1/2,2/3,3/4,4/5)$ & $(1,1,1,1)$
    \end{tabular}
  \end{center}
\caption{The vectors $F^0$ and $F^1$ for different sets of labels
  for four calibration objects with scores $1,2,3,4$.}\label{tab:toy4}
\end{table}
\blueend\fi  
\end{document}